\newtheorem{theorem}{Theorem}
\newtheorem{proposition}{Proposition}
\newtheorem{definition}{Definition}
\newtheorem{corollary}{Corollary}
\newtheorem{lemma}{Lemma}
\newtheorem{assumption}{Assumption}
\title{Backward Oversmoothing: why is it hard to train deep\\Graph Neural Networks?}
\author{%
    \textbf{Nicolas Keriven}\\
  CNRS, IRISA, Rennes, France \\
  \texttt{nicolas.keriven@cnrs.fr}
}
\date{}
\begin{document}

\maketitle
\begin{abstract}
Oversmoothing has long been identified as a major limitation of Graph Neural Networks (GNNs): input node features are smoothed at each layer and converge to a constant non-informative representation, \emph{if the weights of the GNN are sufficiently bounded}. This assumption is crucial: if, on the contrary, the weights are sufficiently large, then oversmoothing may be compensated. Theoretically, GNN could thus \emph{learn} to not oversmooth. However, this does not really happen in practice, which prompts us to examine oversmoothing from an \emph{optimization} point of view.

In this paper, we analyze \emph{backward oversmoothing}, that is, the notion that backpropagated errors are also subject to oversmoothing from output to input. With non-linearities, we outline the key role of the \emph{interaction} between forward and backward smoothing. Then, we show that, due to backward oversmoothing, GNNs provably exhibit many \emph{spurious stationary points}: as soon as the \emph{last} layer is trained, the \emph{whole} GNN is at a stationary point. As a result, we can exhibit regions where gradients are near-zero while the loss stays high. 
Additionally, we prove that this is \emph{specific} to GNNs, and does not necessarily hold for Multi-Layer Perceptrons. 
This paper is a step toward a more complete comprehension of the optimization landscape of GNNs.

\end{abstract}

\section{Introduction}

Graph Neural Networks (GNNs) \cite{Scarselli2009, Bruna2013} are deep architectures that have \emph{de facto} become the state-of-the-art for machine learning problems on graphs \cite{Bronstein2021, Hu2020, Wu2022a}, with numerous applications ranging from chemistry \cite{Sypetkowski2024} to physics \cite{ArjonaMartinez2019}, social networks analysis and epidemiology \cite{Meirom2020}, combinatorial optimization \cite{Cappart2021}, and many others. Given a graph of size $n$ represented by a graph-matrix $P \in \RR^{n \times n}$ (that can be various flavors of normalized adjacency, Laplacian...) and input node features as rows of $X^{(0)} \in \RR^{n \times d_0}$, a vanilla GNN follows the propagation equation:
\begin{equation}\label{eq:gnn}
    X^{(k)} = \rho(PX^{(k-1)}W^{(k)})
\end{equation}
where $W^{(k)} \in \RR^{d_{k-1} \times d_{k}}$ are learnable weights and $\rho$ is a non-linear activation function applied element-wise. A common interpretation is that GNNs realize ``message-passing'' through the matrix $P$. Many variants of message-passing exist, including some that cannot be written as fixed matrix multiplication \cite{Velickovic2018}, but most classical GNNs follow this simple architecture. 

While theoretical and empirical studies of GNNs form a large literature, with e.g. significant progress in the understanding of GNN expressivity or generalization \cite{Vasileiou2025}, some early topics and fundamental issues remain, to this day, active research questions \cite{Morris2024, Morris2024a}. Among them, \textbf{oversmoothing} \cite{Li2018b, Oono2020, Rusch2023} is a long-standing problematic phenomenon. It is rather unique to GNNs, and easy to formulate: since node representations are mixed by the matrix $P$ (``\emph{smoothed}'') at each layer, when the GNN becomes very deep they converge to a constant, uninformative limit. This prevents regular GNNs from being too deep, which is an issue: it has been theoretically shown that GNN need depth for optimal performance \cite{Loukas2019b, Keriven2022a}, and shallow GNNs are subject to \emph{under-reaching}, where distant nodes do not communicate at all \cite{Lu2023}. Oversmoothing has been the topic of a large literature \cite{Rusch2023}, with many strategies to mitigate it, such as different flavors of skip connections \cite{Li2021c} or normalization \cite{Zhao2019a}. Nevertheless, there are still some fundamental open questions about the effect of depth in GNNs in general.

Oversmoothing is rather easy to theoretically prove under appropriate hypotheses \cite{Oono2020, Wu2023} (see Sec.~\ref{sec:forward}). Typically, $P$ is assumed to be a stochastic matrix\footnote{i.e. $P1_n=1_n$. As we remark later, non-stochastic matrices lead to different limits, and deviate from the definitions of oversmoothing in \cite{Rusch2023}.} that shrinks all directions but the constant one. Therefore, \emph{assuming that the weights $W^{(k)}$ do not expand too much the node representations' norm}, typical measures of oversmoothing \cite{Rusch2023} converge exponentially fast to zero as $k$ increases (see Sec.~\ref{sec:forward}). The hypothesis on the weights is crucial: if they are large\footnote{A weight matrix whose spectral norm is larger than $1$ will be said to be \emph{expanding}.}, then oversmoothing may not happen. As such, the GNN \emph{could learn} to not oversmooth, as suggested by \cite{Yang2020a, Epping2024}. However: a) it is not common to directly \emph{initialize} expanding weights, as this can lead to explosion of the gradients, and b) when using regular initialization and gradient descent, vanilla deep GNNs generally do \emph{not} learn to counteract oversmoothing, even though this would clearly lead to better minimizers of the loss. There is therefore a problem \emph{with the optimization process itself}.

\paragraph{Contribution.} In this paper, we thus look at oversmoothing from the optimization point of view. 
For this we put forth the notion of \textbf{backward oversmoothing}: just as the node representations are oversmoothed as $k$ increases (in the ``forward''), then so are the backpropagated errors, when $k$ decreases from output to input. We show that, in the presence of a non-linear $\rho$, it is the \textbf{interaction} between forward smoothing and backward smoothing that eventually produces backward oversmoothing \emph{at the middle layers} (Thm.~\ref{thm:backward} and Fig. \ref{fig}, left). Then, our main result shows that backward oversmoothing produces many \textbf{spurious stationary points}: we prove that, as soon as \emph{the last layer} is trained, then \emph{the whole GNN} is in a near stationary-point (Thm.~\ref{thm:stationary}). We argue that this is the main reason why deep GNNs are ``hard'' to train: they tend to quickly get to an easy-to-reach local minima (Fig.~\ref{fig}, center), with high loss (Cor.~\ref{cor:spurious}). Moreover, we also show that regular Multi-Layer Perceptron (MLP) are \emph{not} subjected to the same phenomenon (Prop.~\ref{prop:mlp}), making our result \emph{specific} to GNNs.
Our results rely on a precise characterization of the smoothed limit of the backward signal -- which is generally not feasible in \emph{forward} oversmoothing due to the non-linearity $\rho$, hence the plethora of studies that study oversmoothing only for linear GNNs \cite{Keriven2022a, Wu2023, Park2024, Chen2025over}. We show that, surprisingly, this characterization \emph{is} possible in the backward, even with non-linear $\rho$. 
%

Note that this paper does \emph{not} propose new methods to mitigate oversmoothing: there are a great number of them already \cite{Rusch2023} and it is slightly out of scope here. Similarly, we only examine vanilla GNNs here, and leave the analysis of existing strategies to mitigate oversmoothing for a future dedicated paper, as we expect it to be quite substantial \cite{Chen2025over}. 
Our primary goal is to explore important aspects of oversmoothing from an optimization point of view that were relatively ignored. 


%



\begin{figure*}
\centering
\includegraphics[height=2.6cm]{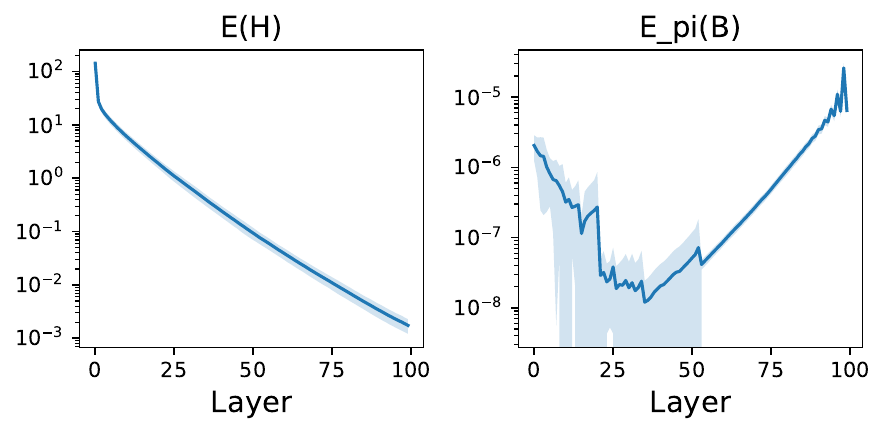}
\includegraphics[height=2.6cm]{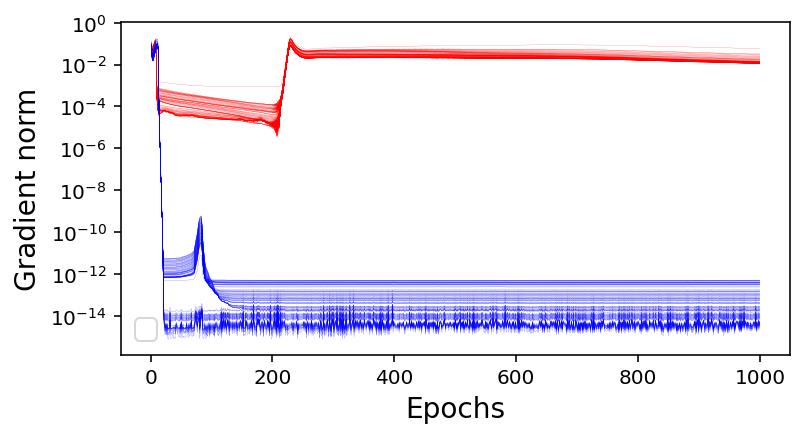}
\includegraphics[height=2.6cm]{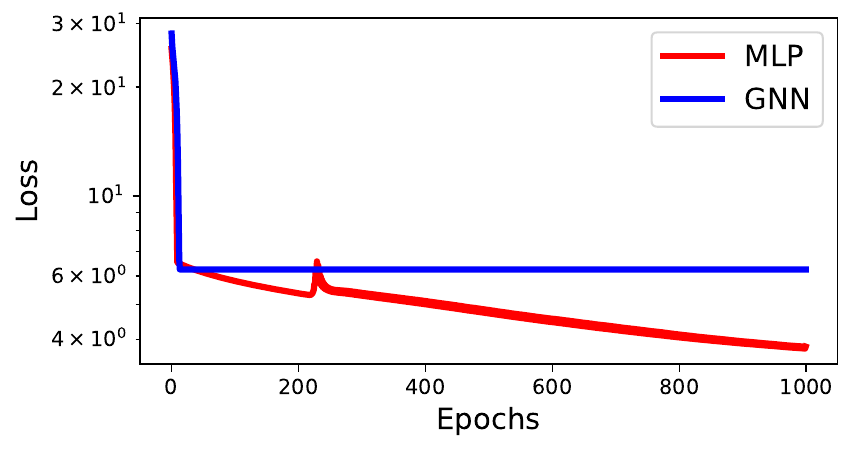}
  \caption{GNN with $P=D^{-1}A$ or MLP with 100 layers, on the WikiCS dataset \cite{Mernyei2022}. \textbf{Left}: illustration of forward oversmoothing by measuring $\Ee(H^{(k)})$ (left) and backward oversmoothing by measuring $\Ee_\pi(B^{(k)})$ (right). \textbf{Center}: norms of gradients $\partial \Ll / \partial W^{(k)}$ for each layer wityh respect to epoch, for MLP (red) and GNN (blue). \textbf{Right}: corresponding losses. Other datasets are illustrated in App.~\ref{app:experiments}.}
  \label{fig}
\end{figure*}

\paragraph{Related work.} Theoretical optimization for deep learning is a vast field that is still riddled with open questions. Many existing results characterize the existence or absence of ``bad'' critical points that are local minima or saddle points \cite{Yun2018, Yun2019a, Zhou2018b, Venturi2019a, Ding2019}. Other works study the convergence to these stationary points \cite{Arora2019, Jin2019, Zhang2022b}. For both line of results, the situation can be reasonably well-characterized with linear activations, which is therefore the focus of many existing works \cite{Yun2018, Arora2019}. In the non-linear case, the existence of local minima is almost always guaranteed \cite{Yun2019a, Ding2019}. In the specific case of GNNs, the authors in \cite{Xu2021} proves global convergence of \emph{linear} GNNs and study the effect of skip connections on convergence rates, while \cite{Du2019} study the convergence of Graph Neural Tangent Kernels, which corresponds to infinitely-wide GNNs. Note that in general, these works \emph{adapt} existing results from MLPs to GNNs, while in contrast, the results and intuitions presented in this paper \emph{are specific to GNNs}, and do \emph{not} apply to MLPs (see Thm.~\ref{thm:stationary} and Prop.~\ref{prop:mlp}). Overall, optimization for GNNs remains a relatively under-explored topic from a theoretical point of view \cite{Morris2024}.

As mentioned above, oversmoothing is a long-standing problem in GNNs \cite{Li2018b, Oono2020}, see \cite{Rusch2023} for a survey. Typical strategies to mitigate it include variants of skip connections \cite{Li2021c, Pei2024, Chen2025over} or normalization \cite{Zhao2019a}. On the theoretical side, various measures of diversity of node representations have been proven to converge exponentially fast to zero \cite{Oono2020, Wu2023} under some hypotheses, including in the attentional case \cite{Wu2023}. 
Skip connections have only recently been proven to indeed theoretically mitigate oversmoothing \cite{Chen2025over}. On the other hand, depth is also provably useful for GNNs \cite{Loukas2019b, Keriven2022a}. Modern approaches to oversmoothing also relate it to \emph{over-squashing} \cite{Fesser2023, Arroyo2025}, where deep GNNs may lose information about distant neighborhoods if they are not wide enough. However we keep our focus on depth and oversmoothing here.

To our knowledge, there are not many works at the intersection of optimization and oversmoothing. Recently, in \cite{Arroyo2025}, the authors relate oversmoothing to vanishing gradients, however: a) they assume Gaussian weights, and b) to our understanding, by assuming contracting Jacobians they place themselves in a case where \emph{the forward signal itself converges to zero} with increasing layer index. This of course results in oversmoothing (a zero signal is of course constant), but we will rather look at a more general case where the forward signal can oversmooth without necessarily converging to $0$ itself (as we show in Prop.~\ref{prop:lower}). 
Closer to us, Park \& Kim \cite{Park2024} introduce the idea of backward oversmoothing (calling it \emph{gradient oversmoothing}) and show that the backpropagated error is smoothed from output to input. However, their work has important limitations: a) it is only valid in the linear case ($\rho=Id$), b) they do not explain why precisely having a smoothed backpropagated error is detrimental for optimization, and c) more importantly, at the time of this writing, there is a crucial error in their proof: they consider the symmetric GCN matrix $P = D^{-1/2}A D^{-1/2}$, which \emph{does not lead to oversmoothing towards constant representation} (which they wrongly claim), as it is not stochastic, but toward the square root of the degree vector instead. This is not a detail: typical stochastic matrices $P$ are \emph{not symmetric}, and, as we will see, this severely complexifies the proof of backward oversmoothing, as $P^\top$ plays a crucial role. In this paper, we do consider a general $\rho$, 
as well as stochastic non-symmetric $P$. 
We use backward oversmoothing to formally prove the existence of spurious stationary points, which we argue is the main reason why deep GNNs are hard to train.


\paragraph{Outline.} We start with preliminaries in Sec.~\ref{sec:prelim}. In Sec.~\ref{sec:oversmoothing} we recall the classical forward oversmoothing, then show backward oversmoothing. In Sec.~\ref{sec:stationary}, we then prove our main result relating to stationary points for deep GNNs. 
We also discuss the difference between GNNs and MLPs. We conclude in Sec.~\ref{sec:conclusion}. Proofs are provided in the appendix, as well as discussions and numerical illustrations.

\section{Preliminaries}\label{sec:prelim}

\paragraph{Notations.} The norm $\norm{\cdot} = \norm{\cdot}_2$ is the euclidean norm for vectors and spectral norm (max singular value) for matrices. The norm $\norm{\cdot}_F$ is the Frobenius norm. The sign $\lesssim$ is an upper bound up to a multiplicative constant that does not involve the number of layers $L$. For $A$ and $B$ of the same size, $A\odot B$ is the Hadamard (element-wise) product.  

\paragraph{Stochastic message-passing matrix.} As described above, GNNs typically use matrices $P\in \RR_+^{n \times n}$ to perform message-passing. 
It is well-known \cite{Oono2020, Wu2023} that oversmoothing towards constant node representations as defined in \cite{Rusch2023} appears for \textbf{stochastic} matrices, i.e. that satisfies $P1_n = 1_n$. On the contrary, as mentioned in the introduction, non-stochastic matrices may result in ``oversmoothing'' towards other limits
, and therefore technically deviate from the definition of oversmoothing from \cite{Rusch2023}. We thus make the following assumption.

\begin{assumption}[Stochastic propagation matrix]\label{ass:P}
The matrix $P\in \RR_+^{n \times n}$ is stochastic and irreducible\footnote{a matrix is irreducible if it cannot be permuted to a block triangular matrix}. 
\end{assumption}
Replacing non-zero entries in $P$ by one, and viewing it as the adjacency matrix of a directed graph, $P$ is irreducible if and only if this directed graph is strongly connected.
As Markov chain transition matrices, irreducible stochastic matrices have a unique stationary distribution $\pi \in \RR^n_+$ with $\pi^\top 1_n=1$ which is a left-eigenvector for the $1$ eigenvalue: $\pi^\top P = \pi^\top$. Moreover, we have
\begin{equation}
    P^k \xrightarrow[k \to \infty]{} 1_n \pi^\top
\end{equation}
Typically, this convergence is exponentially fast, which we formulate in the following assumption.
\begin{assumption}[Convergence speed]\label{ass:speed}
    There is a constant $C_P>0$ and $0<\lambda_P<1$ such that for all $k>0$,
    \begin{equation}
        \norm{P^k - 1_n \pi^\top}_{2} \leq C_P\lambda_P^k
    \end{equation}
\end{assumption}
When $P$ is diagonalizable $P = U^{-1} \Lambda U$ and $1$ is a simple eigenvalue, this assumption is satisfied with $\lambda_P = \abs{\lambda_2}$ the absolute value of the second largest eigenvalue of $P$ (which is indeed smaller than $1$ by the Perron-Frobenius theorem) and $C_P = \norm{U}_2\norm{U^{-1}}_2$. In the rest of the paper, we will also assume that $\pi_i>0$ and denote by $\mu_\pi = \frac{\max(\pi_i)}{\min(\pi_i)}$. 

A typical example of stochastic matrix for GNNs is the well-known random walk matrix $P = D^{-1}A$ where $D$ is the diagonal degree matrix. It corresponds to \emph{mean aggregation} in message-passing, where each node representation is updated as the average of its neighbors. In this case, the stationary distribution is the normalized degree vector $\pi_i = \frac{d_i}{\sum_j d_j}$ where $d_i$ is the degree of node $i$, and $P$ is diagonalizable (and therefore satisfies Ass.~\ref{ass:speed}) when the graph is strongly connected. Another illustrative example is the case where $P$ is symmetric, and therefore \emph{bi}-stochastic. In this case $\pi = \frac{1_n}{n}$ is the uniform distribution, and computations are generally considerably simplified as $P$ is orthogonally diagonalizable. Bi-stochastic matrices are quite uncommon in GNNs, although they are for instance well-known in some domain such as distributed optimization \cite{Boyd2006}, where they are referred to as ``gossip matrices''. All the experiments in this paper are however performed with the more classical choice $P=D^{-1}A$. 

\paragraph{GNN: forward.} Recall that the input node features are $X^{(0)} \in \RR^{n \times d_0}$. We decompose the GNN \eqref{eq:gnn} into the following notations:
\begin{align*}
&F^{(k)} = P X^{(k-1)}  \in \RR^{n \times d_{k-1}}, &&\text{(message-passing)}\\
&H^{(k)} = F^{(k)} W^{(k)} \in \RR^{n \times d_{k}} &&\text{(weights multiplication)}\\
&X^{(k)} = \rho(H^{(k)}), &&\text{(activation function)}\\
&out = H^{(L)} \in \RR^{n \times d_L}, &&\text{(output after $L$ layers)}
\end{align*}
where $\rho$ is a non-linear activation function applied element-wise. We call $F^{(k)}$ the \textbf{forward signal}, going ``from'' input ($k=0$) to output ($k=L$). We assume that the input node features are bounded $\norm{X^{(0)}_{i,:}}_2\leq D_\Xx$. The following assumption encompasses most activation functions.
\begin{assumption}\label{ass:sigma}
The function $\rho$ is $1$-Lipschitz, and $\abs{\rho(x)}\leq \abs{x}$.
\end{assumption}

\paragraph{Spectral norm of the weights.} A crucial ingredients in oversmoothing is the ``contracting'' vs. ``expanding'' nature of the weights $W^{(k)}$. This is generally quantified by $s_k= \|W^{(k)}\|_2$ the leading singular value of $W^{(k)}$, or more simply the maximal 
$s:=\max_k s_k$. Note that the $s_k$ evolve over the optimization process, however here most of our results are expressed at a particular point in time and involve the ``current'' $s_k$ without ambiguity.

\paragraph{Loss function.} We consider a loss function $\Ll: \RR^{n \times d_{L}} \to \mathbb{R}_+$ over the graph nodes, decomposed as:
\begin{equation}\label{eq:loss_decomp}
    \Ll(H) = \tfrac{1}{n}\sum\nolimits_{i=1}^n \ell_i(h_i)
\end{equation}
for some functions $\ell_i:\RR^{d_{L}} \to \RR_+$, where the $h_i$ are the rows of $H$. Note that transductive semi-supervised learning can be emulated by having some 
$\ell_i=0$, but we will generally bypass this detail here for simplification. We assume the following Lipschitz bound.
\begin{assumption}\label{ass:loss}
There are constants $D_\Ll, D'_\Ll \geq 0$ such that
$
    \norm{\tfrac{\partial \ell_i (h)}{\partial h}}_2 \leq D_\Ll \norm{h}_2 + D'_\Ll
$.
\end{assumption}

Our two main running examples will be regression with the Mean Square Error (MSE) loss, or classification with the Cross-Entropy loss. We show the following in App.~\ref{app:loss}.
\begin{exmp}[Regression]\label{ex:reg}
In the regression case, the node labels $y_i\in \RR^{d_{L}}$ are assumed bounded $\norm{y_i}\leq 1$ (w.l.o.g.), and
\begin{equation}
    \ell_i(h) = \tfrac12 \norm{h-y_i}^2
\end{equation}
Assumption \ref{ass:loss} is satisfied with $D_\Ll = D'_\Ll = 1$.
\end{exmp}

\begin{exmp}[Classification]\label{ex:class}
In the classification case, the node labels are discrete $y_i\in \{1,\ldots, C\}$, the output dimension is $d_{L}=C$, and
\begin{equation}
    \ell_i(h) = -\log \pa{e^{h_{y_i}}/\sum\nolimits_c e^{h_c}}
\end{equation}
Assumption \ref{ass:loss} is satisfied with $D_\Ll = 0$ and $D'_\Ll = C+1$.
\end{exmp}

\paragraph{GNN: backward.} 
First-order optimization methods rely on gradients $\frac{\partial \Ll}{\partial W^{(k)}}$, generally computed by backpropagation. For GNNs, the backpropagation equations are
\begin{align}
&\frac{\partial \Ll}{\partial W^{(k)}} = {F^{(k)}}^\top B^{(k)},\, \text{where }\notag \\
& B^{(k)} := \frac{\partial \Ll}{\partial H^{(k)}} = \rho'(H^{(k)}) \odot \pa{P^\top B^{(k+1)} (W^{(k+1)})^\top} \notag\\
&\quad \text{ and } B^{(L)} =  \frac{\partial\Ll}{\partial H^{(L)}} \label{eq:backprop}
\end{align}
We have written these equations here to emphasize the object $B^{(k)} = \frac{\partial \Ll}{\partial H^{(k)}} \in \mathbb{R}^{n \times d_{k}}$ that we call the \textbf{backward signal}\footnote{It has several names in the literature, but here we emphasize its similarity to the forward signal.}. It satisfies a recursion equation, going ``from'' output ($k=L$) to input ($k=0$). Often, the initialization $B^{(L)} =  \frac{\partial\Ll}{\partial H^{(L)}}$ at the last layer represents the predictive ``error'' of the GNN: for instance, for the MSE loss, it is directly $B^{(L)} = \frac{1}{n}(H^{(L)}-Y)$ where $Y$ contains the labels $y_i$ as its rows.
The full gradient $\frac{\partial \Ll}{\partial W^{(k)}} = {F^{(k)}}^\top B^{(k)}$ is a product between the forward and the backward signal in the node dimension. Note that we do not perform mini-batching of nodes here\footnote{Indeed, batching is notoriously hard to define for node-tasks, as nodes are not independent. A GNN would typically require the $L$-hop subgraph around every batched node to be applied, which, for even moderate $L$, is generally the whole graph.} and only consider full gradients. 

\section{Oversmoothing: forward and backward}\label{sec:oversmoothing}

As described in the introduction, oversmoothing generally refers to the fact that node representations will converge to a constant representation as the number of layers increases, typically at exponential speed. Several metrics exist to measure the oversmoothing phenomenon, 
here we adopt the classical measure: for $X \in \RR^{n \times d}$,
\begin{equation}
    \label{eq:dirichlet}
    \Ee(X) := n^{-1/2} \min_{c \in \RR^d} \norm{X - 1_n c^\top}_F
\end{equation}
which is the (normalized) distance from $X$ to the subspace $\text{span}(1_n)$. It is used in several landmark studies on oversmoothing \cite{Oono2020}, and satisfies the conditions formulated in \cite{Rusch2023}. More generally, we will denote the distance to $span(u)$, where $u \in \RR^n$, by 
\begin{equation}\label{eq:distanceu}
\mathcal{E}_u(X) := n^{-1/2} \min_{c \in \RR^d} \norm{X - uc^\top}_F
\end{equation}
such that $\Ee(\cdot) = \Ee_{1_n}(\cdot)$.

\subsection{Forward oversmoothing}\label{sec:forward}

Oversmoothing can happen under several hypotheses. 
To be as generic as possible, similar to \cite{Rusch2023} we formulate forward oversmoothing as a \emph{definition}, and all our subsequent results will hold when it is satisfied. We may thus account for future results that might prove forward oversmoothing in more general cases.


\begin{definition}\label{ass:forward}
We will say that \textbf{forward oversmoothing} holds with rate $0<\lambda_f<1$ and constant $C_f>0$ if
\begin{equation}
    \Ee(H^{(k)}) \leq C_f(\lambda_f s)^{k} 
\end{equation}
where we recall that $s = \max_k \norm{W^{(k)}}_2$.
\end{definition}

The following theorem gathers several known cases where forward oversmoothing holds, and is shown in App.~\ref{app:forward} for completeness.

\begin{theorem}\label{thm:forward}
    Suppose that Assumptions \ref{ass:P} and \ref{ass:sigma} hold. If either:
    \begin{enumerate}[itemsep=0pt]
        \item $\rho=Id$ and Assumption \ref{ass:speed} holds, in which case we consider $\lambda_f = \lambda_P$, or
        \item there is $\lambda_f<1$ such that
        \begin{equation}\label{eq:contractionP}
            \norm{J^\perp PJ^\perp}_2 \leq \lambda_f
        \end{equation}
        where $J^\perp := Id - 1_{n\times n}/n$ is the orthogonal projector onto $span(1_n)^\perp$.
    \end{enumerate}
    Then forward oversmoothing holds with rate $\lambda_f$ and some constant $C_f$. 
\end{theorem}

In other words, either we assume the GNN to be linear, or we assume $P$ to contract every directions but the constant one.
The linear case is very often adopted in the literature \cite{Keriven2022a, Wu2023, Park2024, Chen2025over}, as computations are greatly simplified and the constant limit can actually be characterized (see after). In this case, the mild Ass.~\ref{ass:speed} directly yields forward oversmoothing: we recall that it holds as soon as the graph is strongly connected, and $\lambda_P = \abs{\lambda_2}<1$ is the second-largest eigenvalue of $P$.

For non-linear $\rho$, the assumption \eqref{eq:contractionP} is, surprisingly, not immediate. \textbf{When $P$ is symmetric} and the eigenvalue $1$ is simple, it is easy to see that \eqref{eq:contractionP} holds with $\lambda_f = \abs{\lambda_2}<1$, since $P$ is \emph{orthogonally} diagonalizable. However, \textbf{when $P$ is non-symmetric}, it does \emph{not} necessarily holds \footnote{in particular, one may have $\norm{P}_2\geq 1$, even if all the eigenvalues of $P$ are less than $1$. More precisely, $\norm{P}_2=1$ if and only if $P$ is symmetric!}. Hence, to our knowledge, \emph{there is no theoretical proof of oversmoothing} in the ``non-linear $\rho$, non-symmetric stochastic $P$'' case. This is arguably an important missing piece in the literature, but it is not the purpose of the present paper. 
In practice however, forward oversmoothing seems to hold in the vast majority of cases: in App.~\ref{app:experiments}, we give several datasets for which $\norm{J^\perp P J^\perp}_2 >1$, but for which $\Ee(H^{(k)})$ seems to converge exponentially fast to $0$, even with non-linear $\rho$.

From now on, we will thus \emph{assume} that forward oversmoothing holds. We will also consider $\lambda = \max(\lambda_P, \lambda_f)$ and often express our bounds in term of $\lambda$ for simplicity.

\paragraph{Expansion.} We see that the main condition for $\Ee(H^{(k)}) \to 0$ is $s\lambda <1$. This authorizes \emph{some} expansion $s>1$, but $s$ is still bounded. In our results, we will put an emphasis on ``how much'' expansion is authorized, as we believe that this quantifies how much the weights should deviate from their initial value to escape the oversmoothing regime.
More precisely, we will consider rates of the form $s \leq \lambda^{-\alpha}$ for some \emph{expansion rate} $\alpha \leq 1$. As $\alpha$ decreases, this decreases the range of authorized $s$ above $1$, up until $\alpha=0$ and $s\leq 1$. 
As remarked in the introduction, if at some point during training $s > 1/\lambda$, then the GNN does not necessarily oversmooth \cite{Epping2024}. Hence the GNN could learn to \emph{not} oversmooth \cite{Yang2020a}, but again this does not really happen in practice, hence the need to examine the optimization process in more details.


\paragraph{Limit signal.} Remark that while $X^{(k)}$ converges to a constant vector across nodes, due to the presence of the non-linear activation function $\rho$ it is generally difficult to explicitly characterize \emph{what} this limit constant vector is (although some fixed point theorems may apply in certain circumstances \cite{Arroyo2025}). This is one of the reasons why many theoretical analyses of oversmoothing focus on the linear case $\rho=Id$, where the limit \emph{can} be described with $P^k \to 1_n\pi^\top$ \cite{Keriven2022a, Wu2023, Park2024, Chen2025over}. 
As we will see later, this intuition plays a crucial role in backward oversmoothing, \emph{even} for non-linear $\rho$.


\subsection{Backward oversmoothing}\label{sec:backward}

Due to the multiplication by $P^\top$ from the output layer $k=L$ to the input $k=1$, the backward signal will naturally also be oversmoothed. 
We need the following assumption. 
\begin{assumption}\label{ass:sigmaprime}
Assume that $\rho'$ is $D_\rho$-Lipschitz.
\end{assumption}
Note that this excludes ReLU (which is not fully differentiable anyway) but includes smoothed variants like softplus \cite{Glorot2011}. The following theorem is proved in App.~\ref{app:backward}.

\begin{theorem}\label{thm:backward}
Assume that forward oversmoothing holds with rate $\lambda_f$ and constant $C_f$, and that Assumptions~\ref{ass:P} to \ref{ass:sigmaprime} hold. Then: 
\begin{align*}
    \Ee_\pi(B^{(k)}) \leq C_L (\lambda_P s)^{L-k} + C'_L \lambda_f^k
\end{align*}
where $C_L = \frac{C_PA_L}{n}$, $C'_L = \frac{C_P C_f D_\rho \mu_\pi s^{L}A_L}{(1-\lambda_P \lambda_f s)n}$ and $A_L=D_\Xx D_\Ll s^{L} + D'_\Ll$.

\end{theorem}

We recall that $\Ee_\pi$ is the distance to $span(\pi)$, see \eqref{eq:distanceu}. 
The bound on $\Ee_\pi(B^{(k)})$ involves two terms: one that decreases exponentially in $k$, the other one in $L-k$. Hence backward oversmoothing happens \emph{at the middle layers} (Fig.~\ref{fig}, left), when \emph{both} $k$ and $L-k$ are high. This is due to the term $\rho'(H^{(k)})$ in the backpropagation equations \eqref{eq:backprop} which modulates the rows of $B^{(k)}$. \emph{When the forward signal is oversmoothed} (high $k$), this modulation is almost constant, and backpropagation is exceedingly simple: $B^{(k)} \propto P^\top B^{(k+1)} {W^{(k+1)}}^\top$. It leads to backward oversmoothing after sufficiently many iterations (high $L-k$). On the contrary, at low $k$, the modulation $\rho'(H^{(k)})$ can be high. Hence, it is the \emph{interaction} between forward and backward smoothing that leads to backward oversmoothing at the middle layers. 

Theorem \ref{thm:backward} describe a non-trivial relationship between $\lambda$ and $s$, making it difficult to assess the effects of the expansion rate $\alpha$. We can simplify it by instantiating the loss.
\begin{corollary}\label{cor:backward}
Take $q=2$ for a regression problem with the MSE loss, or $q=1$ for a classification problem with the cross-entropy loss. Consider that the assumptions of Theorem \ref{thm:backward} hold, denote $\lambda = \max(\lambda_P,\lambda_f)$. If $s\leq \lambda^{-\alpha}$ where $0 \leq \alpha < 1-\sqrt{1-\frac{1}{q}}$, then for all $q\alpha < \beta < \frac{1-q\alpha}{1-\alpha}$,
\begin{equation*}
    \Ee_\pi\pa{B^{(\beta L)}} \lesssim \lambda^{(\beta - q\alpha)L} + \lambda^{(1-q\alpha -(1-\alpha)\beta)L} \xrightarrow[L \to \infty]{} 0
\end{equation*}
\end{corollary}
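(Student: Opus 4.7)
The proof is essentially an algebraic specialization of Theorem~\ref{thm:backward} and comes in three steps: plug in the loss constants, substitute $s \leq \lambda^{-\alpha}$ and $k=\beta L$, and identify the admissible range of $\beta$.

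First, I would instantiate the loss-dependent prefactor $D_\Xx D_\Ll s^{L+1} + D'_\Ll$ in Theorem~\ref{thm:backward}. In the cross-entropy case (Example~\ref{ex:class}) we have $D_\Ll=0$, so the prefactor is the constant $D'_\Ll=C+1 \lesssim 1$; in the MSE case (Example~\ref{ex:reg}) we have $D_\Ll = D'_\Ll = 1$, so since $s\geq 1$ the prefactor is $\lesssim s^{L+1}$. A clean way to unify both cases is to write the prefactor as $\lesssim s^{(q-1)(L+1)}$, where $q=1$ corresponds to classification and $q=2$ to regression, so that raising the full bound to its relevant power gives a factor $s^{q(L+1)}$ in the first summand and $s^{(q-1)(L+1)}(\lambda s)^{L-k}$ in the second.

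Next, I would use the assumption $s \leq \lambda^{-\alpha}$ (which implies $\lambda s \leq \lambda^{1-\alpha}$, and also ensures $\lambda^2 s \leq \lambda^{2-\alpha} < 1$ since $\alpha<1$, so the denominator $1-\lambda^2 s$ is a positive constant that can be absorbed in $\lesssim$). Substituting gives, up to constants,
\begin{equation*}
\Ee(B^{(k)}) \lesssim \tfrac{1}{n}\Bigl(\lambda^{-q\alpha L}\,\lambda^{k} + \lambda^{-(q-1)\alpha L}\,\lambda^{(1-\alpha)(L-k)}\Bigr).
\end{equation*}
Setting $k=\beta L$, the first exponent becomes $(\beta - q\alpha)L$ and the second becomes $(1-q\alpha-(1-\alpha)\beta)L$ after expanding $-(q-1)\alpha + (1-\alpha)(1-\beta) = 1 - q\alpha - (1-\alpha)\beta$. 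This gives exactly the claimed bound.

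Finally, I would check the range of admissible $\beta$. For both exponents to be strictly positive I need $\beta > q\alpha$ and $\beta < (1-q\alpha)/(1-\alpha)$. The interval is non-empty iff
\begin{equation*}
q\alpha(1-\alpha) < 1 - q\alpha \quad\Longleftrightarrow\quad q\alpha(2-\alpha) < 1 \quad\Longleftrightarrow\quad (1-\alpha)^2 > 1 - 1/q,
\end{equation*}
which, since $\alpha<1$, is exactly $\alpha < 1 - \sqrt{1-1/q}$, matching the hypothesis. The only mildly delicate part is bookkeeping the prefactor uniformly in $q$; once that is done, the convergence to $0$ as $L\to\infty$ is immediate. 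There is no real obstacle here — the statement is a direct corollary and the main work is ensuring the exponents are identified correctly.
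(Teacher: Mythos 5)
Your proof is correct and follows essentially the same route as the paper's: instantiate the loss constants $D_\Ll, D'_\Ll$, substitute $s\leq\lambda^{-\alpha}$ and $k=\beta L$ into Theorem~\ref{thm:backward}, and check that the interval for $\beta$ is non-empty exactly when $\alpha<1-\sqrt{1-1/q}$. The only nit is the aside ``since $s\geq 1$'': the hypothesis does not force $s\geq 1$, but if $s<1$ the prefactor is bounded by a constant, which is still $\lesssim \lambda^{-(q-1)\alpha L}$, so the final exponents are unchanged.
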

In other words, taking the layer index $k = \beta L$ proportional to the depth $L$, with sufficiently low $\alpha$ we obtain an explicit exponential rate. 


%

\paragraph{Backward oversmoothing is not (necessarily) vanishing gradients.} While one might feel that backward oversmoothing is detrimental for GNN training, it is not clear exactly why this is true. A first idea would be to relate backward oversmoothing to \emph{vanishing gradients}: if $\Ee_\pi(B^{(k)})$ vanishes, then maybe the \emph{norm} $\norm{B^{(k)}}_F$ vanishes altogether, as was shown in \cite{Arroyo2025} in very specific cases. This would be the very well-known phenomenon of vanishing gradient. However, this is not true in general, and we prove the following in App.~\ref{app:lower}.
\begin{proposition}\label{prop:lower}
    There is a graph that satisfies Ass.~\ref{ass:P} with a symmetric $P$ with $\lambda_2<1$ such that the following holds. For all $L$, there is a GNN of depth $L$ that satisfies all the assumptions of Thm. \ref{thm:backward} with $s\leq 1$ such that for all $k$:
    \begin{equation}
        \norm{\partial \Ll/\partial W^{(k)}}_F =1
    \end{equation}
\end{proposition}
That is, we can have both backward oversmoothing (by Thm.~\ref{thm:backward}) but with arbitrarily high gradients. Hence the answer is \emph{not} vanishing gradients, at least not automatically. In the next section, we relate backward oversmoothing to \emph{(spurious) stationary points}.


\section{Backward oversmoothing and spurious stationary points}\label{sec:stationary}

While backward oversmoothing is not directly responsible for vanishing gradients, in practice we observe that deep GNNs tend to quickly and easily get stuck in local minima (Fig.~\ref{fig}, center). This prompts us to analyze the stationary points of deep GNNs. 
We first introduce the following definition of near-stationary points \cite{Shamir2020a}.

\begin{definition}[$\delta$-stationary point]\label{def:sp}
The weight $W^{(k)}$ is said to be at a \textbf{$\delta$-stationary point} if
\begin{equation}
    \norm{\partial \Ll/\partial W^{(k)}}_F \leq \delta 
\end{equation}
If this is true for all $1 \leq k \leq L$, then the GNN is said to be at a \textbf{global $\delta$-stationary point}. 
\end{definition}

Note that, in terms of vocabulary, we somewhat distinguish the notion of stationary points with that of vanishing gradient, even for small $\delta$. Indeed, vanishing gradients often refers to the fact that gradient may have vastly \emph{different} amplitudes across layers, and is generally understood to be a strictly detrimental phenomenon. On the contrary, stationary points can be detrimental -- e.g., ``bad'' local minima
, saddle points -- or simply the sign of successful training: after all, characterizing the rates of convergence to near-stationary points is the main goal of non-convex optimization \cite{Shamir2020a, Hollender2024a}.

We now turn to the main result of this paper. For convenience we define the function
\begin{equation}\label{eq:xi}
    \xi_q(\alpha) = \tfrac{1-(2q+1)\alpha + q\alpha^2}{2(1-\alpha)}
\end{equation}
and note that $\xi_q(\alpha)>0$ for $\alpha \leq 1+\frac{1}{2q} - \sqrt{1 + \frac{1}{4q^2}}$, which is roughly equal to $0.38$ for $q=1$ and $0.21$ for $q=2$. The following theorem is proved in App.~\ref{app:stationary}.

\begin{theorem}[A stationary output layer is a global stationary point]\label{thm:stationary}
    Suppose that Assumptions \ref{ass:P} to \ref{ass:sigmaprime} hold, and that forward oversmoothing holds with rate $\lambda_f$. Denote $\lambda = \max(\lambda_P, \lambda_f)$. Suppose that $s \leq \lambda^{-\alpha}$, with $\xi_q(\alpha) > 0$. Assume that the output forward signal is not zero: $\frac{1}{\sqrt{n}}\norm{F^{(L)}}_F \geq D_F$, and that \textbf{the parameter $W^{(L)}$ is at a $\delta_{W_L}$-stationary point}.
    For $\delta > 0$, if
    \begin{equation}
        L \gtrsim \frac{\log(1/(D_F \delta))}{\xi_q(\alpha)\log(1/\lambda)} \quad\text{ and }\quad
        \delta_{W_L} \lesssim \lambda^{\alpha L} D_F \delta
    \end{equation}
    then \textbf{the GNN is at a global $\delta$-stationary point}.    
\end{theorem}

This theorem shows that, for sufficiently deep GNNs, under some mild assumptions, \textbf{every $\delta_{W_L}$-stationary point \emph{for the output layer} $L$ is also a \emph{global} $\delta$-stationary point}, for some small $\delta$ related to $\delta_{W_L}$ and $L$. In other words, as soon as the last layer of the GNN is ``trained'', gradients vanish \emph{at every layers} and the GNN hardly trains anymore. As one might guess, the last (linear) layer is exceedingly easy to train, which results in many easy-to-reach stationary points. Note that, while backward oversmoothing is the underlying mechanism for this result as we will explain next, $\delta$-stationarity indeed happens \emph{at every layers}. 

\paragraph{Sketch of proof.}
We now provide a sketch of proof of Theorem \ref{thm:stationary}, as we consider that it provides interesting intuitions. The full proof can be found in App.~\ref{app:stationary}.

In forward oversmoothing, the signal $X^{(k)}$ converges to a constant node representation when $k$ increases. As mentioned in the previous section, in the linear case $\rho=Id$ it is easy to characterize the limit by leveraging $P^k \to 1_n\pi^\top$, which is not possible anymore in the non-linear case. 
The situation is different for backward oversmoothing. As mentioned earlier, \emph{when the forward signal is oversmoothed} (near the output layer), the term $\rho'(H^{(k)})$ is approximately constant, and \textbf{the update equation is approximately} $B^{(k)} \propto P^\top B^{(k+1)} (W^{(k+1)})^\top$, which is nothing more than a linear GNN! Hence we \emph{can} approximate the limit representation of the backward oversmoothed signal, and it relates to the \emph{average} of the output error: at the middle layers, $B^{(k)} \propto \pi 1_n^\top B^{(L)} (W_L \ldots W_{k+1})^\top$, using $(P^\top)^k \to \pi 1_n^\top$. We thus obtain the following theorem, which is the core of the proof of Thm~\ref{thm:stationary}. A more detailed version, and a proof, can be found in App.~\ref{app:upper}.
\begin{theorem}\label{thm:upper}
Take $q=2$ for a regression problem, or $q=1$ for a classification problem. Suppose that Assumptions \ref{ass:P} to \ref{ass:sigmaprime} hold, and that forward oversmoothing holds with rate $\lambda_f$. Denote $\lambda = \max(\lambda_P, \lambda_f)$. If $s\leq \lambda^{-\alpha}$ with $\xi_q(\alpha)>0$, the GNN is at a global $\delta$-stationary point, with
    \begin{equation}
        \delta \lesssim \lambda^{\xi_q(\alpha) L} + \lambda^{-\alpha L} \norm{1_n^\top B^{(L)}}_2
    \end{equation}
\end{theorem}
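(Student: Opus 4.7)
My plan is to bound $\|\partial\Ll/\partial W^{(k)}\|_F = \|(F^{(k)})^\top B^{(k)}\|_F$ uniformly in $k \in \{1,\ldots,L\}$ by decomposing both the forward and backward signals into row-means and per-node deviations. Writing $F^{(k)} = 1_n(\bar{f}^{(k)})^\top + F^{(k)}_\perp$ and $B^{(k)} = 1_n(\bar{b}^{(k)})^\top + B^{(k)}_\perp$ and using that $1_n^\top F^{(k)}_\perp = 1_n^\top B^{(k)}_\perp = 0$, the gradient splits as
$$(F^{(k)})^\top B^{(k)} = (F^{(k)}_\perp)^\top B^{(k)}_\perp + \bigl((F^{(k)})^\top 1_n\bigr)(\bar{b}^{(k)})^\top.$$
I will call the first summand the ``perp-perp'' term and the second the ``mean-mean'' term; the latter is where the advertised $\lambda^{-\alpha L}\|1_n^\top B^{(L)}\|_2$ contribution must come from.

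For the perp-perp term I would use $\|(F^{(k)}_\perp)^\top B^{(k)}_\perp\|_F \leq \|F^{(k)}_\perp\|_2 \|B^{(k)}_\perp\|_F$, bounding $\|F^{(k)}_\perp\|_F \lesssim \sqrt{n}(\lambda s)^k$ exactly as in the proof of Thm~\ref{thm:forward}, and $\|B^{(k)}_\perp\|_F = \sqrt{n}\Ee(B^{(k)})$ via Thm~\ref{thm:backward}. This yields a contribution that decays whenever forward \emph{or} backward oversmoothing has already kicked in, and it is strongest at the middle layers.

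The core of the proof is the mean-mean term, which requires controlling $\|1_n^\top B^{(k)}\|_2$. Here the key observation is that splitting $\rho'(H^{(k)}) = 1_n(\bar{\rho}^{(k)})^\top + \Delta^{(k)}$ into its row-average and the deviation turns the Hadamard product against the average into ordinary right-multiplication by the diagonal matrix $\mathrm{diag}(\bar{\rho}^{(k)})$, producing a quasi-linear recursion
$$1_n^\top B^{(k)} = (1_n^\top B^{(k+1)})\, W^{(k+1)\top} \mathrm{diag}(\bar{\rho}^{(k)}) + (r^{(k)})^\top,$$
with residual $r^{(k)}_l = \sum_i \Delta^{(k)}_{i,l} M^{(k)}_{i,l}$ where $M^{(k)} = PB^{(k+1)}W^{(k+1)\top}$. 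Since $\sum_i \Delta^{(k)}_{i,l} = 0$, only the perpendicular part survives, $r^{(k)}_l = \sum_i \Delta^{(k)}_{i,l} M^{(k)}_{\perp,i,l}$ with $M^{(k)}_\perp = P B^{(k+1)}_\perp W^{(k+1)\top}$, so Cauchy-Schwarz gives $\|r^{(k)}\|_2 \lesssim D_\rho \|H^{(k)}_\perp\|_F \cdot \lambda s \|B^{(k+1)}_\perp\|_F$---small precisely when forward and backward oversmoothing both apply at layer $k$. Unrolling yields $\|1_n^\top B^{(k)}\|_2 \leq s^{L-k}\|1_n^\top B^{(L)}\|_2 + \sum_{j=k}^{L-1} s^{j-k}\|r^{(j)}\|_2$, and combining it with $\|1_n^\top F^{(k)}\|_2 = \|1_n^\top X^{(k)}\|_2 \leq n s^k D_\Xx$ (using $1_n^\top P = 1_n^\top$), the principal term is $\lesssim s^L \|1_n^\top B^{(L)}\|_2 \leq \lambda^{-\alpha L}\|1_n^\top B^{(L)}\|_2$, matching the second term of the claim.

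The hard part will be balancing the remaining exponentials---the perp-perp contribution of the form $(\lambda s)^k\cdot \sqrt{n}\Ee(B^{(k)})$ and the accumulated Lipschitz error $\sum_{j\geq k} s^{j-k}\|r^{(j)}\|_2$---into a single uniform rate $\lambda^{\xi_q(\alpha) L}$ that is valid for \emph{every} $k$ simultaneously. This is where the separate scaling of $\|B^{(L)}\|_F$ for regression (with $D_\Ll \neq 0$, so the extra factor $s^L$ from Assumption~\ref{ass:loss} gives $q=2$) versus classification ($D_\Ll = 0$, hence $q=1$) enters, and where the explicit form $\xi_q(\alpha) = \tfrac{1-(2q+1)\alpha+q\alpha^2}{2(1-\alpha)}$ should emerge by optimizing over the crossover point at which the forward-oversmoothing factor $(\lambda s)^k$ and the backward-oversmoothing factor coincide; the condition $\xi_q(\alpha)>0$ is then exactly what is needed for the combined bound to decay exponentially in $L$.
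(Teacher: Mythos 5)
Your decomposition is essentially the paper's $J/J^\perp$ split, and your quasi-linear recursion for $1_n^\top B^{(k)}$ (the Hadamard product against the row-average of $\rho'(H^{(k)})$ becoming right-multiplication by a diagonal matrix, with a residual controlled by the perpendicular parts) is exactly the mechanism the paper uses to bound $\epsilon_n^{(k)}=\|1_n^\top B^{(k)}\|_2$; it also correctly produces the $\lambda^{-\alpha L}\|1_n^\top B^{(L)}\|_2$ term. The genuine gap is the claim that this single layer-wise split yields a bound valid for \emph{every} $k$: it fails at the first layers. There $\|F^{(k)}_\perp\|_F$ is of order $\sqrt{n}$ (forward oversmoothing has not kicked in), while the bound of Thm.~\ref{thm:backward} on $\mathcal{E}(B^{(k)})$ contains a term which, for $s\le\lambda^{-\alpha}$, scales like $\lambda^{k-q\alpha L}/n$ (cf.\ the proof of Cor.~\ref{cor:backward}); for small $k$ this is of order $\lambda^{-q\alpha L}/n$, i.e.\ backward oversmoothing is only active at middle layers. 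Hence near the input \emph{neither} factor of your perp-perp product decays, and that product is $O(\lambda^{-q\alpha L})$ (merely $O(1)$ even when $s\le1$), not $O(\lambda^{\xi_q(\alpha)L})$. The same issue affects the accumulated residuals $\sum_{j\ge k}s^{j-k}\|r^{(j)}\|_2$, whose early-$j$ terms involve $\|B^{(j+1)}_\perp\|_F$ at layers where it is not small. Moreover, since $k$ is the layer index and not a free parameter, there is no crossover to optimize inside a single uniform estimate: the worst case of your bound over $k$ sits at small $k$ and does not give $\xi_q(\alpha)$.

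The missing ingredient is the paper's separate treatment of the first layers. Fix a crossover $k^\ast=\beta L$; for layers $\ell\ge k^\ast$ argue as you do; for $\ell<k^\ast$ do not split at layer $\ell$ at all, but use $\|\partial\mathcal{L}/\partial W^{(\ell)}\|_F\le\|F^{(\ell)}\|_F\,\|B^{(\ell)}\|_F$ together with $\|B^{(\ell)}\|_F\le s^{k^\ast-\ell}\|B^{(k^\ast)}\|_F$, and the fact that at the middle layer $k^\ast$ the \emph{whole} backward signal is small in Frobenius norm, $\|B^{(k^\ast)}\|_F\le \epsilon_n^{(k^\ast)}/\sqrt{n}+\sqrt{n}\,\mathcal{E}(B^{(k^\ast)})$, both pieces being small because $k^\ast$ and $L-k^\ast$ are both proportional to $L$. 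The exponent $\xi_q(\alpha)$ then emerges by balancing, over $\beta$, the late-layer bound (of order $\lambda^{((1-\alpha)\beta-q\alpha)L}$, coming from the residual of the mean recursion and the $s^{L}$-type prefactors) against the early-layer bound (of order $\lambda^{(1-\beta-q\alpha)L}$, coming from propagating the crossover layer backward at cost $s^{k^\ast}$); the condition $\xi_q(\alpha)>0$ is exactly the existence of a feasible $\beta$ — not, as you suggest, a balance between $(\lambda s)^k$ and the backward factor within one uniform estimate. With this two-regime structure added, the rest of your argument goes through and is essentially the paper's proof.
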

The key component here is the norm of $1_n^\top B^{(L)} = \frac{1}{n}\sum_i \frac{\partial \ell_i(h_i)}{\partial h_i}$, which is indeed the \emph{average} of the output error of the GNN. One notable aspect of this result is that it applies \emph{to every layer} $k$, even though oversmoothing is the underlying phenomenon -- first, middle and last layers are indeed treated differently in the proof.
To conclude with the proof of Theorem \ref{thm:stationary}, we then bound the average output error $\norm{1_n^\top B^{(L)}}_2$ when the last layer is at a stationary point: \emph{since the output of the GNN is almost constant due to forward oversmoothing}, when the last layer is trained, the \emph{average} error automatically vanishes. 
%

Note that the assumption that $\frac{1}{\sqrt{n}}\norm{F^{(L)}}_F \geq D_F$ is satisfied when $F^{(L)}_{i,:}$ is bounded away from $0$ for each node, but it is somewhat a proof artifact. Indeed, since $\partial \Ll/\partial W^{(L)} = {F^{(L)}}^\top B^{(L)}$, one may get to $\delta_{W_L}$-stationarity simply because $F^{(L)}\approx 0$ and not because of any training of $W_L$, which is not what we want here. If $F^{(L)} \approx 0$, it would probably be possible to adapt the theorem to replace the $\delta_{W_L}$-stationarity of $W^{(L)}$ to that of the first non-zero layer $F^{(k)}$, but this would be exceedingly verbose and does not conceptually change the result.


\paragraph{Spurious stationary point.} As mentioned in the introduction, stationary points are not inherently bad: reaching a stationary point \emph{is} the goal of optimization \cite{Shamir2020a, Hollender2024a}, in the hope that it is a good (local) minimum in the non-convex case. A further question is, can we leverage Theorem \ref{thm:stationary} or \ref{thm:upper} to show that deep GNNs can have a global $\delta$-stationary point, for an arbitrary \emph{high} loss? Such a result would be quite unusual in non-convex optimization, as the existence of stationary point is generally decorrelated from the value of the loss in itself. We give an example below, proved in App. \ref{app:spurious}.

\begin{corollary}[Spurious stationary points]\label{cor:spurious}
    Suppose that Assumptions \ref{ass:P} to \ref{ass:sigmaprime} hold, and that forward oversmoothing holds with rate $\lambda_f$. Denote $\lambda = \max(\lambda_P, \lambda_f)$. Consider the regression case, and assume that the labels are centered:
    $$
    \sum_i y_i = 0
    $$
    Denote by $\sigma_y^2 = \frac{1}{n} \sum_i \norm{y_i}_2^2$.
    Suppose that $s \leq \lambda^{-\alpha}$, with $\xi_2(\alpha) > 0$, and the last weight is such that $\norm{W_L}_2 \leq \epsilon_{W_L}$.
    For $\delta, \epsilon>0$, if
    \begin{equation*}
        L \gtrsim \frac{\log(1/\delta)}{\xi_2(\alpha)\log(1/\lambda)} \text{ and }
        \epsilon_{W_L} \lesssim \min\pa{\lambda^{2 \alpha L} \delta,~\tfrac{\lambda^{\alpha L} }{\sigma_y} \epsilon}
    \end{equation*}
    then the GNN is at a \textbf{global $\delta$-stationary point}, and \textbf{the loss satisfies}
    \begin{equation}
        \Ll(H^{(L)}) \geq \sigma_y^2/2 - \epsilon
    \end{equation}
\end{corollary}
In other words, for a centered regression problem and sufficiently deep GNNs, it is easy to find a flat region of near-stationary points such that \emph{the loss is uniformly high}, close to the variance of the labels. While we constructed this particular example around $W_L=0$, it is likely that many other flat regions with high loss could be constructed on the same principles, by leveraging Thm.~\ref{thm:upper}. 

%

\paragraph{GNNs are not MLPs.} 
It is known, both in theory and in practice, that the landscape of MLPs is riddled with local minima and saddle points \cite{Venturi2019a, Zhou2018b, Ding2019, Yun2019a}. Can we make certain that the stationary points identified by Thm.~\ref{thm:stationary} are indeed \emph{specific} to GNNs? The following proposition answers by the affirmative.
\begin{proposition}[MLPs are not GNNs]\label{prop:mlp}
Consider a regression problem, in the MLP case $P=Id$. For all $n$ there is a data distribution such that the following holds.
For all depth $L>0$, there is an MLP that satisfies Assumptions \ref{ass:sigma} and \ref{ass:sigmaprime}, such that $s\leq 1$, $\frac{1}{\sqrt{n}}\norm{F^{(L)}} = 1$, $\frac{\partial \Ll}{\partial W^{(L)}} = 0$, but there exists $k$ such that
\begin{equation}
    \norm{\frac{\partial \Ll}{\partial W^{(k)}}} = 1 
\end{equation}
\end{proposition}
This proposition indeed contradicts Thm.~\ref{thm:stationary} for MLPs: there is a regression problem and an arbitrarily deep MLP that satisfy all the Assumptions of Thm. \ref{thm:stationary} -- including that $F^{(L)}$ is lower-bounded and $W_L$ is at a stationary point -- such that at least one gradient does not vanish. Hence, our results are not another characterization of existing stationary points in deep learning, but are instead \emph{specific} to GNNs. 


\paragraph{Mitigating oversmoothing.} 
Finally, we recall that the results presented in this paper only apply to vanilla GNNs, without the common fixes to oversmoothing: skip connections, normalization, and so on. We leave the detailed analysis of these methods for future work, as we expect it to be quite substantial (similar to \cite{Chen2025over}) but state the following brief remarks. Intuitively, skip connections in the forward $X^{(k)} = X^{(k-1)} + \rho(P X^{(k-1)} W^{(k)})$ also results in skip connections in the backward signal. Hence, we may expect to extend existing analyses that skip connections provably mitigate forward oversmoothing \cite{Chen2025over} to the backward signal. One known difficulty is that, done naively, skip connections can result in exploding signal amplitude: since the latter may more or less scale as $(1+s)^k$, one must generally choose to initialize the weights with $s = O(1/k)$, as is sometimes done in ResNets \cite{Zhang2019fixup}. Intuitively, this is also true for the backward signal in GNNs \cite{Park2024}. Our analysis of expanding weights with rate $s\leq \lambda^{-\alpha}$ would therefore be significantly different.
Concerning normalization strategies, they are probably much harder to analyze, as highly non-linear operations. In the future, we may approach them for toy examples, or random weights with particular distributions \cite{Chen2025over}.

\paragraph{Numerical illustrations.} In App.~\ref{app:experiments}, we provide a few numerical illustrations of our results on different synthetic and real datasets, similar to Fig.~\ref{fig}. In particular, we observe that forward oversmoothing seem to always hold, even when \eqref{eq:contractionP} is not satisfied and $\rho$ is non-linear. We also briefly illustrate the effects of skip connections, even though this is not covered by the theory in this paper. 

\section{Conclusion}\label{sec:conclusion}

In this paper, we explored the links between oversmoothing and optimization. Our major finding is that deep GNNs possess many \emph{spurious near-stationary points}, that are a) easy-to-reach: as soon as the last layer is stationary; and b) fully specific to GNNs: we exhibit a counter-example MLP. The underlying mechanism is \emph{backward oversmoothing}, that is, the backward errors are also prone to oversmoothing. Key observations are that: a) for non-linear GNNs, backward oversmoothing \emph{interacts} in non-trivial ways with forward oversmoothing, and b) when the forward signal is also oversmoothed, the recursive iterations on the backward signal are almost linear, meaning that we can fully \emph{characterize the constant oversmoothed limit}, despite the non-linearity $\rho$. The latter observation is the key to relating stationarity of the GNN with the \emph{average output error}. Our results hold even in the general case with non-linear $\rho$ and non-symmetric $P$.

\paragraph{Outlooks.} As we have seen, forward oversmoothing is not entirely solved, and this is an important missing piece in the literature, even if it is not directly related to the results presented in this paper.

The most direct extension of this paper would be to cover the mitigation strategies for oversmoothing, skip connection and normalization. Theoretical analysis of skip connections are highly non-trivial \cite{Chen2025over}, but might be adapted in the backward case. On the practical side, our results may help design better \emph{optimization algorithms} for GNNs, e.g. not based on vanilla backpropagation \cite{Zhao2024DFA}.

More generally, given the relative scarcity of theoretical works on optimization for GNNs \cite{Morris2024a}, our results are a first step towards a better comprehension of this field. As mentioned above, a major characteristic of our results and intuitions is that \emph{they are specific to GNNs}, which is not so common -- a large part of the literature adapt existing results from MLP to GNNs. An important path for future work would thus be to identify more fundamental differences between existing results for MLPs \cite{Yun2018, Yun2019a, Zhou2018b, Venturi2019a, Ding2019} and GNNs, beyond simple adaptations.




\newpage

{\small
\bibliography{library}

}
\normalsize

\newpage
\appendix
\onecolumn

\section{Proofs}\label{app:proof}
For $X \in \mathbb{R}^{n \times d}$, we define $\norm{X}_{\infty,2} := \max_i \norm{X_{i,:}}$. We also denote $c_\pi = \norm{\pi}_2$, which is in $O(1/\sqrt{n})$ when $\mu_\pi = O(1)$.


\subsection{Proof of Thm.~\ref{thm:forward}} \label{app:forward}

By Lemma \ref{lem:distance}, we have that $\Ee(H) = \norm{J^\perp H}_F/\sqrt{n}$.

\paragraph{Case 1.} If $\rho=Id$, then
\begin{equation*}
    \norm{J^\perp H^{(k)}}_F = \norm{P^k X^{(0)} W^{(1)} \ldots W^{(k)}}_F \leq \norm{J^\perp P^k X^{(0)}}s^{k}
\end{equation*}
using $\|A B\|_F \leq \|A\|_F \|B\|_2$. Then, since $\norm{J^\perp P^k J X} = \norm{J^\perp J X} = 0$ and $\norm{J^\perp 1_n \pi^\top X} = 0$ and $\norm{J^\perp}_2=1$ (it's a projector), we have
\begin{align}
    \norm{J^\perp P^k X^{(0)}}_F &= \norm{J^\perp P^k J^\perp X^{(0)}}_F = \norm{J^\perp (P^k - 1_n\pi^\top) J^\perp X^{(0)}}_F \notag \\
    &\leq \norm{J^\perp}_2 \norm{(P^k - 1_n\pi^\top) J^\perp X^{(0)}}_F \notag \\
    &\leq \norm{(P^k - 1_n\pi^\top) J^\perp X^{(0)}}_{F} \notag \\
    &\leq \norm{P^k - 1_n\pi^\top}_{2} \norm{J^\perp X^{(0)}}_{F} \notag \\
    &\leq \norm{J^\perp X^{(0)}}_{F} C_P\lambda_P^k \label{eq:forward_by_speed}
\end{align}
Hence forward oversmoothing holds with $\lambda_f=\lambda_P$ and $C_f = \Ee(X^{(0)}) C_P$.

\paragraph{Case 2.} By recursion, we have: first, as in the previous case,
\begin{equation}
    \Ee(XW) \leq \Ee(X) \norm{W}_2
\end{equation}
Then, we remark that $\Ee$ can be reformulated as a sum over pairwise differences: by a simple computation,
$$
\Ee(X)^2 = \frac{1}{2n^2} \sum_{i,j} \norm{X_{i,:} - X_{j,:}}^2
$$
From this expression, since $\rho$ is $1$-Lipschitz, we immediately get $\Ee(\rho(X)) \leq \Ee(X)$.

Finally, as in the previous case $\norm{J^\perp P J X} = 0$ and thus
\begin{equation}
    \norm{J^\perp PX}_F = \norm{J^\perp P J^\perp X}_F \leq \norm{J^\perp P (J^\perp)^2 X}_F \leq \lambda_f \norm{J^\perp X}_F
\end{equation}
since $(J^\perp)^2 = J^\perp$ and $\norm{J^\perp P J^\perp}_2 \leq \lambda_f$ by hypothesis. Hence
\begin{equation*}
    \Ee(PX) \leq \lambda\Ee(X)
\end{equation*}
Combining all these results
\begin{equation*}
    \Ee(H^{(k)}) = \Ee(P \rho(H^{(k-1)}) W^{(k)}) \leq \lambda s \Ee(H^{(k-1)})
\end{equation*}
and an easy recursion shows that forward oversmoothing holds with rate $\lambda_f$ and constant $C_f = \Ee(X^{(0)})$.

\subsection{Assumption \ref{ass:loss} for MSE and cross-entropy}\label{app:loss}

For the MSE, we have
\begin{equation}
    \frac{\partial \ell_i}{\partial h} = h-y_i 
\end{equation}
hence $D_\Ll = D'_\Ll = 1$ (since $y_i \leq 1$).

For the cross entropy loss with $C$ classes, we have
\begin{equation}
    \norm{\frac{\partial \ell_i}{\partial h}} \leq \abs{\frac{ \exp(h_{y_i})}{\sum_c \exp(h_c)}-1} + \sum_{e\neq y_i} \abs{\frac{\exp(h_e)}{\sum_c \exp(h_c)}} \leq C+1 
\end{equation}
hence $D_\Ll = 0$ and $D'_\Ll = C+1$.

\subsection{Proof of Thm \ref{thm:backward}}\label{app:backward}

Denote $\rho_k = \rho'(H^{(k)})$, $v_k = \rho_k^\top 1_n/n$, $V^{(k)} = {W^{(k+1)}}^\top \text{diag}(v_k)$, and $M_k = (J^\perp \rho_k) \odot \pa{P^\top B^{(k+1)} {W^{(k+1)}}^\top}$.

Remark that, since $\norm{v_k}_\infty \leq 1$, for any $x$ such that $\norm{x}=1$,
\begin{align*}
    \norm{V^{(k)} x} = \norm{W^{(k+1)} (v_k \odot x)} \leq s_{k+1} \norm{v_k}_\infty \norm{x} \leq s_{k+1}
\end{align*}
such that $\norm{V^{(k)}}_2 \leq s_{k+1}$.

Using $(1 v^\top) \odot B = B\cdot \text{diag}(v)$, we decompose $B^{(k)}$ as:
\begin{align}
    B^{(k)} &= \rho_k \odot \pa{P^\top B^{(k+1)} {W^{(k+1)}}^\top} = (J \rho_k) \odot \pa{P^\top B^{(k+1)} {W^{(k+1)}}^\top} + M_k\notag \\
    &= P^\top B^{(k+1)} V^{(k)} + M_k 
\end{align}
By a simple recursion,
\begin{equation} \label{eq:Bdecomp}
    B^{(k)} = \underbrace{(P^\top)^{L-k} B^{(L)} V^{(L-1)}\ldots V^{(k)}}_{B_1=B_1^{(k)}} + \underbrace{\sum_{\ell=k+1}^{L-1} (P^\top)^{\ell-k} M_\ell V^{(\ell-1)}\ldots V^{(k)} + M_k}_{B_2=B_2^{(k)}}
\end{equation}

Since $\Ee_u(X+Y)\leq \Ee_u(X) + \Ee_u(Y)$ by Lemma \ref{lem:distance}, we bound $\Ee_\pi$ for each term. For the first, using Lemma \ref{lem:distance} and Assumption \ref{ass:speed},
\begin{align*}
    \Ee_\pi(B_1) &= \Ee_\pi\pa{(P^\top)^{L-k} B^{(L)} V^{(L-1)}\ldots V^{(k)}} \leq \Ee_\pi\pa{(P^\top)^{L-k} B^{(L)}}s^{L-k} \\
    &= \Ee_\pi\pa{\pa{(P^\top)^{L-k} - \pi 1_n^\top} B^{(L)}}s^{L-k} \\
    &\leq n^{-1/2} \norm{\pa{(P^\top)^{L-k} - \pi 1_n^\top} B^{(L)}}_F s^{L-k} \\
    &\leq n^{-1/2} \norm{P^{L-k} - 1_n \pi^\top}_2\norm{B^{(L)}}_F s^{L-k} \\
    &\leq C_P (s\lambda)^{L-k} \norm{B^{(L)}}_{\infty,2} \leq C_P (s\lambda)^{L-k} \frac{A_L}{n}
\end{align*}
by Lemma \ref{lem:bounds}, using $\norm{\cdot}_{F} \leq n^{1/2} \norm{\cdot}_{\infty,2}$, where $A_L = s^L D_\Xx D_\Ll + D'_\Ll$.

For the second term,
\begin{align*}
    \Ee_\pi(B_2) &\leq \sum_{\ell=k+1}^{L-1} \Ee_\pi\pa{(P^\top)^{\ell-k} M_\ell V^{(\ell-1)}\ldots V^{(k)}} + \Ee_\pi(M_k) \\
    &\leq \sum_{\ell=k}^{L-1} \Ee_\pi\pa{(P^\top)^{\ell-k} M_\ell} s^{\ell-k}\\
    &\leq \sum_{\ell=k}^{L-1} \Ee_\pi\pa{\pa{(P^\top)^{\ell-k} - \pi 1_n^\top} M_\ell} s^{\ell-k} \\
    &\leq n^{-1/2} C_P \sum_{\ell=k}^{L-1} (\lambda s)^{\ell-k} \norm{M_\ell}_F
\end{align*}
and then
\begin{align*}
    \norm{M_k}_F &= \norm{(J^\perp \rho_k) \odot \pa{P^\top B^{(k+1)} {W^{(k+1)}}^\top}}_F \\
    &\leq \norm{J\rho_k}_F \norm{P^\top B^{(k+1)} {W^{(k+1)}}^\top}_{\infty} = n^{1/2}\Ee(\rho_k) \norm{P^\top B^{(k+1)} {W^{(k+1)}}^\top}_{\infty}
\end{align*}
By Lemma \ref{lem:norminf} and \ref{lem:bounds}, since $D_\pi^{-1} P^\top D_\pi$ is a stochastic matrix, we have
\begin{align*}
    \norm{P^\top B^{(k+1)} (W^{(k+1)})^\top}_\infty &\leq \norm{P^\top B^{(k+1)} (W^{(k+1)})^\top}_{\infty,2} \\
    &\leq \pi_{\max}\norm{D_\pi^{-1}P^\top D_\pi D_\pi^{-1} B^{(k+1)} (W^{(k+1)})^\top}_{\infty,2} \\
    &\leq \pi_{\max} s\norm{D_\pi^{-1} B^{(k+1)}}_{\infty,2} \leq \frac{\mu_\pi}{n} s^{L-k}A_L
\end{align*}
and since $\rho'$ is $D_\rho$-Lipschitz,
\begin{align*}
    \Ee(\rho_k) \leq D_\rho \Ee(H^{(k)}) \leq D_\rho C_f s^k\lambda_f^k
\end{align*}
Hence
\begin{equation}\label{eq:normMk}
    \norm{M_k}_F \leq \frac{D_\rho C_f\mu_\pi s^{L}A_L}{\sqrt{n}}\lambda_f^k 
\end{equation}

Thus,
\begin{align*}
    \Ee_\pi(B_2) &\leq C_P\frac{\mu_\pi}{n} s^{L}A_L C_f D_\rho \sum_{\ell=k}^{L-1} (\lambda s)^{\ell-k} \lambda_f^k \\
    &\leq \frac{C_P C_f D_\rho \mu_\pi s^{L}A_L}{n} \lambda_f^k\sum_{\ell=k}^{L-1} (\lambda \lambda_f s)^{\ell-k} \\
    &\leq \frac{C_P C_f D_\rho \mu_\pi s^{L}A_L}{(1-\lambda \lambda_f s)n} \lambda_f^k
\end{align*}
which concludes the proof.

\subsection{Proof of Corollary \ref{cor:backward}}\label{app:cor:backward}


In the regression case, we have $D_\Ll=1$, while in the classification case, $D_\Ll=0$. Hence, taking $\lambda = \max(\lambda_P,\lambda_f)$ and $s\leq \lambda^{-\alpha}$ with $0\leq \alpha <1$, the result of Thm \ref{thm:backward} reads
\begin{align*}
    \Ee_\pi(B^{(k)}) \lesssim \lambda^{L-k-\alpha(L-k) - (q-1)\alpha L} + \lambda^{k - q\alpha L}
\end{align*}
where $q=2$ in the regression case and $q=1$ in the classification case, and the multiplicative constant does not depend on $L$.
Choosing $k = \beta L$, we get
\begin{align*}
    \Ee_\pi(B^{(k)}) \lesssim \lambda^{(\beta - q\alpha)L} + \lambda^{(1-q\alpha - (1-\alpha)\beta)L}
\end{align*}
This goes to $0$ as $L$ increases when $\beta > q\alpha$ and $\beta < \frac{1-q\alpha}{1-\alpha}$. This is possible when $q\alpha < \frac{1-q\alpha}{1-\alpha}$, which after a few computations leads to the condition $\alpha < 1-\sqrt{1-1/q}$.


\subsection{Proof of Prop \ref{prop:lower}}\label{app:lower}

    Consider a trivial GNN with $\rho=Id$ and $W^{(k)}=1$, with $d_k = 1$ for all layer $k$, a constant input signal $X^{(0)} = 1_n$, constant zero labels $Y=0_n$, and the MSE loss $\Ll(H) = \frac{1}{2n}\norm{H - Y}_2^2$. Then, since $P1=1$, $F^{(k)}=1_n$ and $B^{(k)} = \frac{2}{n} 1_n$ for all $k$. The gradient is therefore constant, equal to
    \begin{equation*}
        \abs{{F^{(k)}}^\top B^{(k)}} = 1
    \end{equation*}
    
    

\subsection{Proof of Thm \ref{thm:stationary}}\label{app:stationary}

Our strategy is to bound $\epsilon_n := \norm{1_n^\top B^{(L)}}_2$ and use Thm \ref{thm:upper}. We have
\begin{align}
    \delta_{W_L} \geq \norm{\frac{\partial \Ll}{\partial H^{(L)}}}_F &= \norm{{F^{(L)}}^\top B^{(L)}}_F \notag \\
    &\geq \norm{{J F^{(L)}}^\top B^{(L)}}_F - \norm{{J^\perp F^{(L)}}^\top B^{(L)}}_F \notag \\
    &\geq \norm{v 1_n^\top B^{(L)}}_F - n \norm{B^{(L)}}_{\infty,2} \Ee(F^{(L)}) \notag \\
    &\geq \norm{v}_2 \epsilon_n - A_L C_P \lambda C_f(s\lambda)^{L-1} D_\Xx \label{eq:stationary_basis}
\end{align}
where $v = \frac{1}{n} 1_n^\top F^{(L)}$, since $\Ee(PX) \leq C_P \lambda \Ee(X)$ similarly to \eqref{eq:forward_by_speed}.

Since we have $
    \frac{1}{\sqrt{n}}\norm{F^{(L)}}_F \geq D_F
$ by hypothesis, we have
\begin{align*}
    \norm{v}_2 &= \frac{1}{\sqrt{n}}\norm{v 1_n}_F = \frac{1}{\sqrt{n}}\norm{J F^{(L)}}_F \\
    &\geq \frac{1}{\sqrt{n}}\norm{F^{(L)}}_F - \frac{1}{\sqrt{n}}\norm{J^\perp F^{(L)}}_F \geq D_F - \Ee(F^{(L)}) \geq D_F - D_\Xx C_P \lambda C_f (s \lambda)^{L-1}
\end{align*}
Therefore, by combining the two, for $L$ large enough:
\begin{equation*}
    \epsilon_n \leq \frac{\delta_{W_L} + A_L C_P \lambda C_f(s\lambda)^{L-1} D_\Xx}{D_F - D_\Xx C_P \lambda C_f (s \lambda)^{L-1}}
\end{equation*}

Combined with Thm \ref{thm:upper}, the GNN is at a global $\delta$-stationary point, with
\begin{equation*}
    \delta \lesssim \frac{1}{D_F}\pa{\lambda^{\xi_q(\alpha) L} + \lambda^{(1-(q+1)\alpha) L} + \lambda^{-\alpha L} \delta_{W_L}}
\end{equation*}
Equating the two terms and inverting we get conditions on $L$ and $\bar\delta$ to bound the r.h.s. 


A few computations show that $\frac{1-(2q+1)\alpha + q \alpha^2}{2(1-\alpha)} \leq 1-(q+1)\alpha$. 
We thus get
\begin{equation*}
    \delta \lesssim \frac{1}{D_F}\pa{\lambda^{\xi_q(\alpha)L} + \lambda^{-\alpha L} \delta}
\end{equation*}
which conclude the proof.

\subsection{Proof of Corollary \ref{cor:spurious}}\label{app:spurious}


We have, using Lemma \ref{lem:bounds} and $1_n^\top Y=0$,
\begin{align*}
    \norm{1_n^\top B^{(H)}}_2 &= \norm{1_n^\top\pa{F^{(L)} W^{(L)} - Y}}_2 \\ &\leq n s^{L-1} D_\Xx \epsilon_{W_L}
\end{align*}
And therefore, applying Theorem \ref{thm:upper}, the GNN is at a $\delta$-stationary point, with
\begin{equation*}
    \delta \lesssim \lambda^{\xi_2(\alpha)L} + \lambda^{-2\alpha L} \epsilon_{W_L}
\end{equation*}
On the other hand, the loss is
\begin{align*}
    \Ll(H^{(L)}) &= \frac{1}{2n}\norm{F^{(L)}W^{(L)} - Y}_F^2 &\geq \frac{1}{2n}\pa{\norm{Y}_F - \sqrt{n} s^{L-1} D_\Xx \epsilon_{W_L}}^2 &\geq \sigma_y^2/2 - \frac12 \sigma_y s^{L-1} D_\Xx \epsilon_{W_L}
\end{align*}
using $\norm{Y}_F = \sqrt{n}\sigma_y$ and $(a-b)^2 \geq a^2 - 2ab$ when $a,b\geq 0$.

\subsection{Proof of Proposition \ref{prop:mlp}}\label{app:mlp}

Consider a regression problem such that the covariance of the node features is the identity $\frac{1}{n}X^\top X = Id_d$. Take a vector $w \in \RR^d$ such that $\norm{w}=1$, and assume that the labels are $Y = X(w + w^\perp)$ where $w^\perp$ is a vector orthogonal to $w$ with $\norm{w^{\perp}}=1$.

Consider a linear MLP of depth $L$ with $\rho=Id$, with dimension $d_\ell=d$ for $\ell\leq k$, and $d_\ell=1$ for $\ell>k$. Assume that all the weights are $W_\ell = Id$ (either in dimension $d$ or $1$), except for the layer $k$ where the weights are $W_k = w$. We thus have $F^{(L)} = Xw$. Then, we have
\begin{align}
    \frac{\partial \Ll}{\partial W_L} &= {F^{(L)}}^\top B^{(L)} = \frac{1}{n}w^\top X^\top \pa{Xw - Y} \\
    &=w^\top \pa{w - w - w^\perp} = 0
\end{align}
such that $W_L$ is at a stationary point. Moreover,
\begin{equation}
    \frac{1}{n}\norm{F^{(L)}}_F^2 = \frac{1}{n}\norm{Xw}_F^2 = \norm{w}^2 = 1
\end{equation}
so the output signal is indeed lower-bounded by $D_F=1$.

However,
\begin{align*}
    \norm{\frac{\partial \Ll}{\partial W^{(k)}}} &= \norm{(XW^{(0)} \ldots W^{(k-1)})^\top B^{(L)} (W^{(k+1)}\ldots W^{(L)})^\top} \\
    &= \frac{1}{n}\norm{X^\top \pa{Xw - Y}} = \norm{w^{\perp}} = 1
\end{align*}



\subsection{Proof of Thm \ref{thm:upper}}\label{app:upper}

We are going to prove the following, more detailed version of Thm \ref{thm:upper}.

\begin{theorem}\label{thm:upper_detailed}
    Assume that forward oversmoothing holds with rate $\lambda_f=\lambda$ and constant $C_f$. Under Assumptions \ref{ass:P} to \ref{ass:sigmaprime}, the GNN is at a global $\delta$-stationary point, with
    \begin{equation}
        \delta \leq C_1 (\lambda^{L-k} + \lambda^k) + C_2 (s \lambda)^k + C_3 \epsilon_n 
    \end{equation}
    where
    \begin{align*}
        C_1 &= \sqrt{n} D_\Xx \mu_\pi s^L A_L C_P \\
        C_2 &= \sqrt{n} D_\Xx \mu_\pi s^L A_L \frac{D_\rho C_f}{1-\lambda s} \\
        C_3 &= n D_\Xx \mu_\pi s^L c_\pi 
    \end{align*}
    If additionally $P$ is symmetric, the constants can be replaced with
    \begin{align*}
        C_1 &= D_\Xx s^L A_L\\
        C_2 &= D_\Xx s^L A_L \frac{D_\rho}{1-\lambda s} \\
        C_3 &= D_\Xx s^L 
    \end{align*}
\end{theorem}

\begin{proof}[Proof of Thm \ref{thm:upper_detailed}]

Our strategy is to use again the decomposition \eqref{eq:Bdecomp}, but to bound directly the amplitude of $B^{(k)}$ instead of $\Ee_\pi(B^{(k)})$. This will involve the sum of the output error $\epsilon_n := \norm{1_n^\top B^{(L)}}$.

\paragraph{Last layers}
In the last layers, the forward signal is almost constant, and therefore $F^\top B \approx v 1_n^\top B$ for some vector $v$, from which the sum $1_n^\top B$ appears. We thus directly examine the sum $1_n^\top B$.

For the first term in \eqref{eq:Bdecomp}, since $P$ is a stochastic matrix,
\begin{equation*}
    \norm{1_n^\top B_1} = \norm{1_n^\top \left[(P^\top)^{L-k} B^{(L)} V^{(L-1)}\ldots V^{(k)}\right]} = \norm{1_n^\top \left[B^{(L)} V^{(L-1)}\ldots V^{(k)}\right]} \leq \epsilon_n s^{L-k}
\end{equation*}

Then, by Lemma \ref{lem:matrix_computation} and by \eqref{eq:normMk}: for any $q$,
\begin{align*}
    \norm{1_n^\top((P^\top)^q M_k) } = \norm{1_n^\top M_k} &\leq \sqrt{n} \norm{M_k}_F \leq D_\rho C_f \mu_\pi  A_L s^L \lambda_f^{k}
\end{align*}
such that
\begin{equation*}
    \norm{1_n^\top B_2} \leq D_\rho C_f \mu_\pi  A_L s^L\sum_{\ell=k}^{L-1} \lambda^{\ell}  s^{\ell-k} \leq \frac{D_\rho C_f \mu_\pi  A_L s^L \lambda^{k}}{1-\lambda s}
\end{equation*}


From this we get, using $\norm{F^\top 1_n/n} \leq \norm{F}_{\infty,2}$ and $\norm{B}_{F}/\sqrt{n}\leq \norm{B}_{\infty,2}$, Lemma \ref{lem:bounds} and \ref{lem:norminf},
\begin{align*}
    \norm{\frac{\partial \Ll}{\partial W^{(k)}}}_F = \norm{{F^{(k)}}^\top B^{(k)}}_F &\leq \norm{(J F^{(k)})^\top B^{(k)}}_F + \norm{( J^\perp F^{(k)})^\top B^{(k)}}_F \notag \\
    &= \norm{\frac{{F^{(k)}}^\top 1_n}{n} \cdot 1_n^\top B^{(k)}}_F + \norm{B^{(k)}}_F \sqrt{n} \Ee(F^{(k)}) \notag\\
    &\leq \norm{F^{(k)}}_{\infty,2} \norm{1_n^\top B^{(k)}} + n\norm{B^{(k)}}_{\infty,2} \Ee(P\rho(H^{(k-1)}))\\
    &\leq s^{k-1} D_\Xx \pa{\epsilon_n s^{L-k} + \frac{D_\rho C_f \mu_\pi  A_L s^L \lambda^{k}}{1-\lambda s}} + \mu_\pi A_L s^{L-k} C_P \lambda C_f s^{k-1} \lambda_f^{k-1}\\
    &\leq C_f \mu_\pi s^{L} A_L \pa{\frac{D_\rho D_\Xx s^k}{1-\lambda s} + C_P} \lambda^k + s^L D_\Xx \epsilon_n
\end{align*}
since $\Ee(PX) \leq C_P \lambda \Ee(X)$ similarly to \eqref{eq:forward_by_speed}.

Since $\lambda s <1$, the rhs is decreasing with $k$ and therefore we obtain that: for all $k$ and all $\ell \geq k$,
\begin{equation}\label{eq:mainbound1}
    \norm{\frac{\partial \Ll}{\partial H^{(\ell)}}}_F \leq C_f \mu_\pi s^{L} A_L \pa{\frac{D_\rho D_\Xx}{1-\lambda s}(\lambda s)^k + C_P\lambda^k}  + s^L D_\Xx \epsilon_n
\end{equation}

\paragraph{Middle and first layers}
In the middle layers, the forward signal is still oversmoothed, such that the norm of $M_k$ is still small. Moreover, the product of matrices $(P^\top)^{L-k}$ in the first term of \eqref{eq:Bdecomp} starts to become closer and closer to $\pi 1_n^\top$, such that $(P^\top)^{L-k} B^{(L)}$ \emph{directly becomes related} to $\epsilon_n=1_n^\top B^{(L)}$. As a result, \emph{the norm of $B^{(k)}$ itself} (and not of $1_n^\top B^{(k)}$ as in the last layers) becomes approximately related to $\epsilon_n$ directly. We thus examine $\norm{B^{(k)}}_F$.

For the first term of \eqref{eq:Bdecomp}, by Assumption \ref{ass:speed},
\begin{align*}
    \norm{B_1}_F &\leq \pa{\norm{[(P^\top)^{L-k} - \pi 1_n^\top] B^{(L)}}_F + \norm{\pi 1_n^\top B^{(L)}}_F} s^{L-k} \\
    &\leq \pa{C_P\lambda^{L-k} \sqrt{n} \norm{B^{(L)}}_{\infty,2} + c_\pi \epsilon_n}s^{L-k} \\
    &\leq \pa{C_P\lambda^{L-k} \frac{A_L}{\sqrt{n}} + c_\pi \epsilon_n}s^{L-k}
\end{align*}

Next, by Lemma \ref{lem:norminf} and \ref{lem:bounds}, since $D_\pi^{-1} (P^\top)^q D_\pi$ is a stochastic matrix for any $q$,
we directly examine
\begin{align*}
    \norm{D_\pi^{-1}(P^\top)^q M_k}_{\infty,2} &= \norm{D_\pi^{-1} (P^\top)^q D_\pi D_\pi^{-1} M_k}_{\infty,2} \leq \norm{D_\pi^{-1} M_k}_{\infty,2} \notag\\
    &= \norm{(J^\perp \rho_k) \odot (D_\pi^{-1} P^{\top} B^{(k+1)} {W^{(k+1)}}^\top)}_{\infty,2} \notag\\
    &\leq \norm{J^\perp \rho_k}_\infty \norm{D_\pi^{-1}P^\top D_\pi D_\pi^{-1} B^{(k+1)} (W^{(k+1)})^\top}_{\infty,2} \notag\\
    &\leq \norm{J^\perp \rho_k}_F s \norm{D_\pi^{-1} B^{(k+1)}}_{\infty,2} \leq \frac{A_L}{\sqrt{n}\pi_{\min}} s^{L-k} \Ee(\rho_k) \\
    &\leq \frac{D_\rho A_L s^L C_f}{\sqrt{n}\pi_{\min}} \lambda^k
    \end{align*}
by the same computation as in the proof of Thm \ref{thm:backward} to bound $\norm{D_\pi^{-1} B^{(k+1)}}_{\infty,2}$, and since $\norm{J^\perp \rho_k}_F = \sqrt{n} \Ee(\rho_k)$.

Thus 
\begin{align}\label{eq:normB2}
    \norm{D_\pi^{-1} B_2}_{\infty,2} \leq \frac{D_\rho  A_L s^L C_f}{\sqrt{n}\pi_{\min}}\sum_{\ell=k}^{L-1} \lambda^{\ell}  s^{\ell-k} \leq \frac{D_\rho  A_L s^L C_f}{\sqrt{n}\pi_{\min}(1-\lambda s)} \lambda^k
\end{align}

Then we directly relate the norm of $B^{(\ell)}$ to that of middle layers $B^{(k)}$ for $\ell<k$: by Lemma \ref{lem:bounds}, 
\begin{align}
    \norm{B^{(\ell)}}_F &\leq \sqrt{n}\pi_{\max}s^{k-\ell}\norm{D_\pi^{-1} B^{(k)}}_{\infty, 2}\leq \sqrt{n}s^{k-\ell} (\mu_\pi \norm{B_1}_F + \pi_{\max} \norm{D_\pi^{-1} B_2}_{\infty, 2})\label{eq:replace_sym} \\
    &\leq\sqrt{n}s^{k-\ell} \pa{\mu_\pi\pa{C_P\lambda^{L-k} \frac{A_L}{\sqrt{n}} + c_\pi \epsilon_n}s^{L-k} + \frac{D_\rho A_L s^L C_f\mu_\pi}{\sqrt{n}(1-\lambda s)}\lambda^k}\notag
\end{align}
where $B_1,B_2$ are defined in \eqref{eq:Bdecomp}, and using $\norm{\cdot}_{\infty,2}\leq \norm{\cdot}_F$. We then obtain that for all $k$ and all $\ell < k$,
\begin{align}
    \norm{{F^{(\ell)}}^\top B^{(\ell)}}_F &\leq \sqrt{n}s^\ell D_\Xx \norm{B^{(\ell)}}_F \notag \\
    &\leq \sqrt{n}D_\Xx \mu_\pi s^L \pa{C_P\lambda^{L-k} A_L + \sqrt{n} c_\pi \epsilon_n + \frac{D_\rho A_L C_f}{1-\lambda s}(s\lambda)^k} \label{eq:mainbound2}
\end{align}

Consider some index $k$. Combining \eqref{eq:mainbound1} to bound the gradients with $\ell\geq k$ and \eqref{eq:mainbound2} to bound the gradients with $\ell\leq k$, we obtain that for \emph{all} indices $\ell$:
\begin{equation}
    \norm{\frac{\partial\Ll}{\partial W^{(\ell)}}} \leq C_1 (\lambda^{L-k} + \lambda^k) + C_2 (s \lambda)^k + C_3 \epsilon_n \label{eq:finalbound}
\end{equation}
where
\begin{align*}
    C_1 &= \sqrt{n} D_\Xx \mu_\pi s^L A_L C_P \\
    C_2 &= \sqrt{n} D_\Xx \mu_\pi s^L A_L \frac{D_\rho C_f}{1-\lambda s} \\
    C_3 &= n D_\Xx \mu_\pi s^L c_\pi 
\end{align*}

\paragraph{Symmetric case}
When $P$ is symmetric, we have $\norm{P}_2=1$, $\mu_\pi=1$, $c_\pi =1/\sqrt{n}$, $C_P = C_f = 1$, and all assumptions are verified with $\lambda = \lambda_f$ the second-largest eigenvalue of $P$.

Hence, since $\norm{P}_2=1$ we can replace \eqref{eq:normB2} by:
\begin{align*}
    \norm{B_2}_F &\leq \sum_{\ell=k}^{L-1} s^{\ell-k} \norm{M_\ell} \leq \frac{D_\rho s^L A_L}{\sqrt{n}(1-\lambda s)}\lambda^k
\end{align*}
by \eqref{eq:normMk}. Similarly, we can replace \eqref{eq:replace_sym} by:
\begin{align*}
    \norm{B^{(\ell)}}_F &\leq s^{k-\ell}\norm{B^{(k)}}_{F} \\
     &\leq s^{k-\ell}\pa{\pa{\lambda^{L-k} \frac{A_L}{\sqrt{n}} + \epsilon_n/\sqrt{n}}s^{L-k} + \frac{D_\rho s^L A_L}{\sqrt{n}(1-\lambda s)}\lambda^k} \\
\end{align*}
and \eqref{eq:mainbound2} is replaced by
\begin{align}
    \norm{{F^{(\ell)}}^\top B^{(\ell)}}_F &\leq \sqrt{n}s^\ell D_\Xx \norm{B^{(\ell)}}_F \notag \\
    &\leq D_\Xx s^L \pa{\lambda^{L-k} A_L + \epsilon_n + \frac{A_L}{1-\lambda s}(s\lambda)^k} \label{eq:mainbound2sym}
\end{align}

And therefore the bound \eqref{eq:finalbound} is valid with
\begin{align*}
    C_1 &= D_\Xx s^L A_L\\
    C_2 &= D_\Xx s^L A_L \frac{D_\rho}{1-\lambda s} \\
    C_3 &= D_\Xx s^L 
\end{align*}

\end{proof}

We can now prove the variant in the paper Thm \ref{thm:upper}.

\begin{proof}[Proof of Theorem \ref{thm:upper}]
    

Taking $s\leq \lambda^{-\alpha}$ and $k = \beta L$, the result of Thm \ref{thm:upper_detailed} can be rewritten as:
\begin{align*}
    \norm{\frac{\partial \Ll}{\partial H^{(\ell)}}}_F \lesssim \lambda^{((1-\alpha)\beta - q\alpha)L} + \lambda^{(1 - \beta - q\alpha)L} + \lambda^{-\alpha L} \epsilon_n
\end{align*}
The terms that do not depend on $\epsilon_n$ decrease with $L$ when $\frac{q\alpha}{1-\alpha}< \beta < 1-q\alpha$. This is possible when $0 < 1 - (2q +1)\alpha + q\alpha^2$, which translates to having
\begin{equation}
    \alpha < 1 + \frac{1}{2q} - \sqrt{1 + \frac{1}{4q^2}}
\end{equation}
which is equal to $\approx 0.38$ for $q=1$ and $\approx 0.21$ for $q=2$. Choosing then $\beta = \frac12\pa{\frac{q\alpha}{1-\alpha} + 1-q\alpha}$, we get the final rate
\begin{align*}
    \norm{\frac{\partial \Ll}{\partial H^{(\ell)}}}_F \lesssim \lambda^{\xi_q(\alpha) L} + \lambda^{-\alpha L} \epsilon_n
\end{align*}
\end{proof}

\section{Additional experiments}\label{app:experiments}
Here we illustrate backward oversmoothing on several datasets, with $P=D^{-1}A$. Recall that our goal is not to benchmark GNNs or oversmoothing mitigation strategies on many datasets, as this has been done many times in the literature (see \cite{Rusch2023} and reference therein), but to briefly illustrate our theoretical results. 

\paragraph{Settings.} We consider the classical Cora, Citeseer and Pubmed datasets \cite{Yang2016a}, as well as the WikiCS dataset \cite{Mernyei2022} and two synthetic datasets of random graphs drawn from a Corrected Stochastic Block Model (CSBM) with two classes, for two different levels of density. All graphs are transformed into undirected graphs and restricted to their largest connected component, such that $P=D^{-1}A$ is diagonalizable and Ass.~\ref{ass:speed} is satisfied with $\lambda_P = \abs{\lambda_2}<1$ the second-largest eigenvalue. Deep GNNs and MLP use 100 layers, except for the dense synthetic dataset where they use 40 layers (to avoid reaching machine-precision). We found that the phenomena described in this paper are exacerbated for the MSE loss (in particular the fact that the last layer $W_L$ quickly reaches a stationary point), hence for classification datasets we convert the labels into integers and cast only regression problems. As this paper is not concerned with generalization (and the considered models have all already been extensively benchmarked) but rather with gradient norms, we display the training loss and consider that every node is labelled for simplicity. To really assess oversmoothing and avoid vanishing signal/gradients as much as possible, we initialize the weights as $Id + \epsilon \Nn$, where $\epsilon$ is a small parameter and $\Nn$ contains independent normal variables. This way, $s \approx 1$, and our theorems are satisfied while avoiding vanishing gradients. We use a Leaky ReLU non-linearity $\rho$.

\paragraph{Forward oversmoothing and rates of convergence.} In table \ref{tab}, we compute $\lambda_2$ and $\norm{J^\perp P J^\perp}_2$ for all datasets. We first observe that \eqref{eq:contractionP} is satisfied only for the \emph{dense} synthetic dataset. However, even if the conditions of Thm.~\ref{thm:forward} are not satisfied, we observe in Fig.~\ref{fig:app} (left) that forward oversmoothing indeed seems to hold. With a simple regression, we then estimate the rate of convergence $\lambda_f$, and we observe that $\lambda_f \approx \lambda_P$. While it seems that the higher the density, the lower $\lambda_P$, which is intuitively not surprising since it is related to the spectral gap of $Id-P$, the value of $\norm{J^\perp P J^\perp}_2$ does not seem to be directly related to the density of the graph (except for the synthetic dense graph), which hints at a more complex situation. Overall, as observed in the paper, the general case ``non-linear $\rho$, non-symmetric $P$'' remains an important and non-trivial missing piece of forward oversmoothing.

\paragraph{GNN training, comparison with MLP.} In Fig.~\ref{fig:app}, we train a deep GNN and a deep MLP on our datasets. We first observe (left) that backward oversmoothing indeed holds \emph{in the middle layers} as shown by Thm.~\ref{thm:backward}. Then, training our models and monitoring both the loss and the norm of the gradients, it indeed seems that deep GNNs tend to quickly reach a stationary point with a high loss, compared to their MLP counterpart. As shown by the theory, this effect is greatly accentuated when $\lambda$ is smaller (Tab.~\ref{tab}), in particular in the synthetic datasets and WikiCS dataset. On the three other datasets, this is less clear, as $\lambda$ is quite close to $1$.

\paragraph{Skip connections.} In Fig.~\ref{fig:skip}, we compare GNN with and without skip connections. We first observe (left) that skip connections indeed allow to largely counter both forward and backward oversmoothing. However, stability of the overall model largely depends on $s$: a too large $s$ can lead to explosion of both forward and backward signal. With a balanced $s$ around $s=\frac{5}{L}$, skip connections also appear to avoid the stationary points of regular GNNs (center and right). However, when using skip connections naively like this (without normalization, etc.), the loss still tends to plateau compared to the MLPs of Fig.~\ref{fig:app}. Future work will have to theoretically analyze (similar to \cite{Chen2025over}), and hopefully eventually improve, the use of skip connections from an optimization point of view.

\begin{table}[]
    \centering
    \begin{tabular}{|c||c|c|c|c|}
    \hline
         Dataset & Density & $\lambda_f$ (est.) & $\lambda_P=\abs{\lambda_2}$ & $\norm{J^\perp P J^\perp}_2$  \\ \hline\hline 
         Synthetic (sparse) & $8.80e-3$ & $0.906$ & $0.923$ & $1.72$ \\ \hline
         Synthetic (dense) & $2.99e-2$ & $0.744$ & $0.742$ & $0.788$ \\ \hline
         Cora & $1.64e-3$ & $0.953$ & $0.971$ & $4.48$ \\ \hline
         Citeseer & $1.64e-3$ & $0.960$ & $0.984$ & $4.43$ \\ \hline
         Pubmed & $2.28e-3$ & $0.952$ & $0.986$ & $6.98$ \\ \hline
         WikiCS & $3.37e-3$ & $0.901$ & $0.917$ & $9.55$ \\ \hline
    \end{tabular}
    \caption{Properties of the different datasets. The density is the number of edges over the number of possible edges $n(n-1)/2$. The rate $\lambda_f$ is estimated by a regression on the forward oversmoothing curve (Fig.~\ref{fig:app}, left).}
    \label{tab}
\end{table}

\begin{figure*}[]
\def\hei{2.6cm}
\centering
\includegraphics[height=\hei]{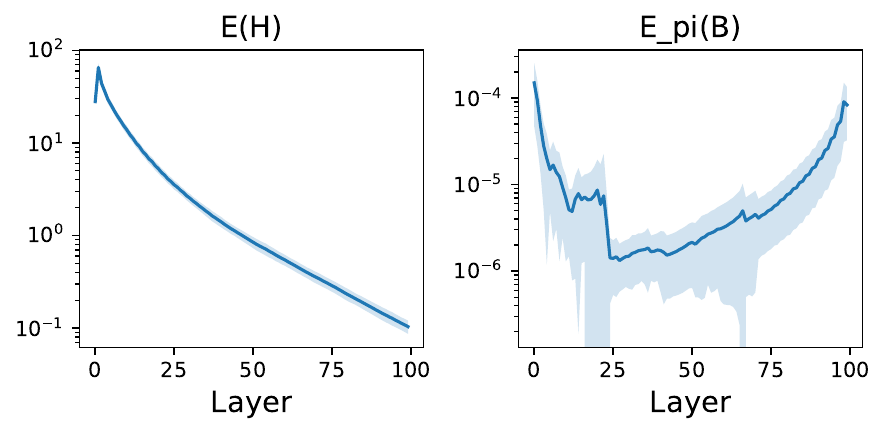}
\includegraphics[height=\hei]{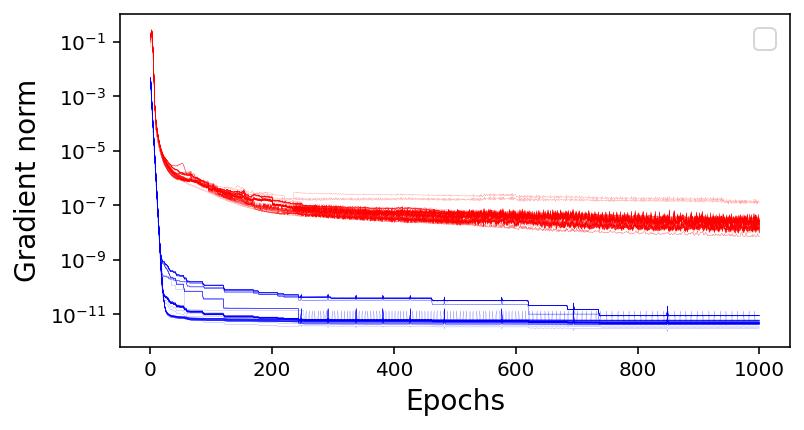}
\includegraphics[height=\hei]{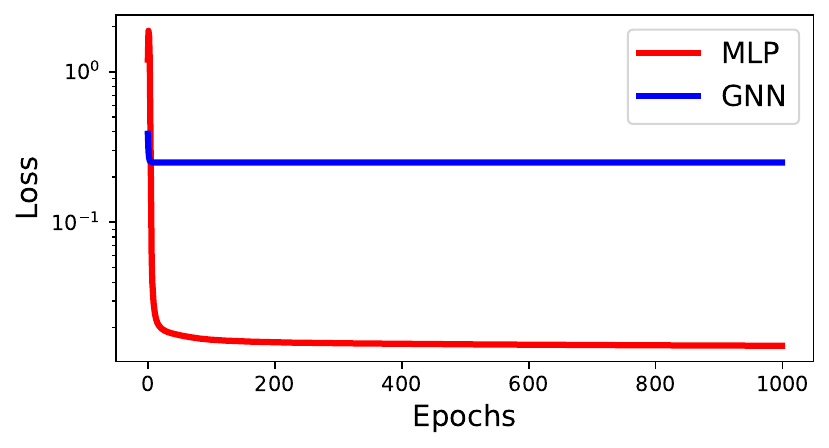} \\\includegraphics[height=\hei]{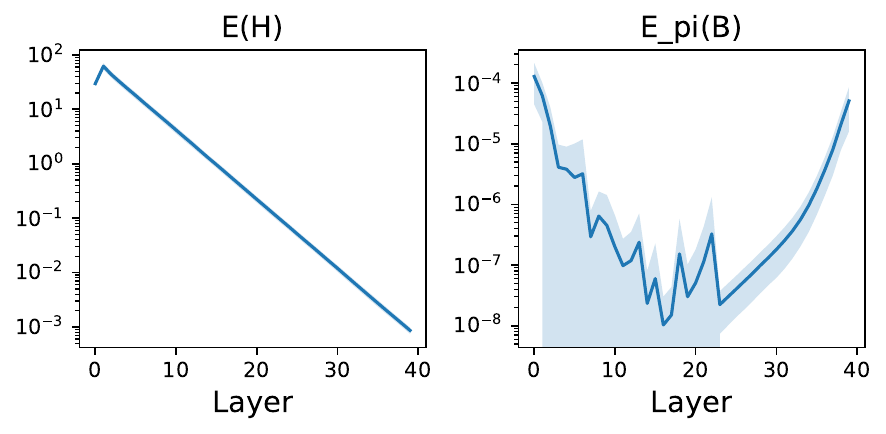}
\includegraphics[height=\hei]{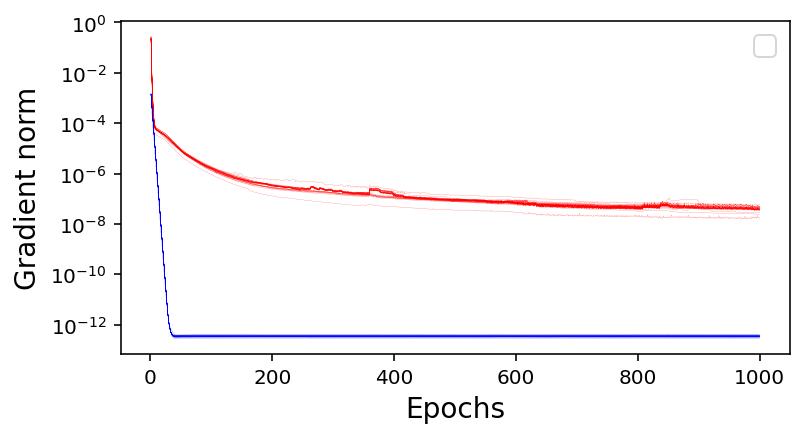}
\includegraphics[height=\hei]{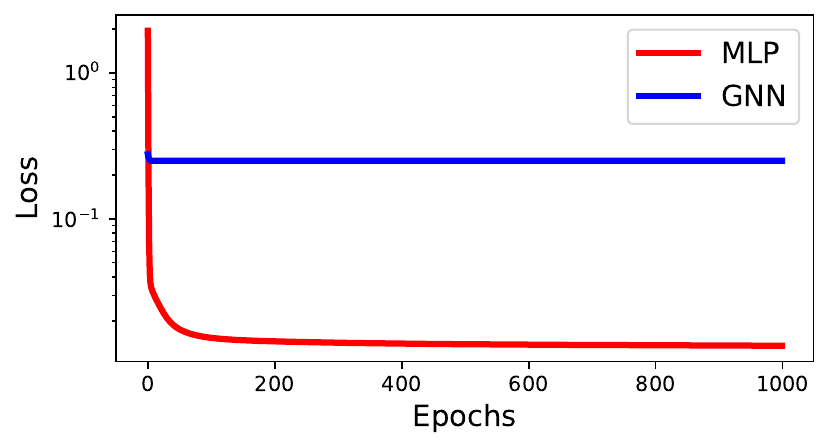} \\
\includegraphics[height=\hei]{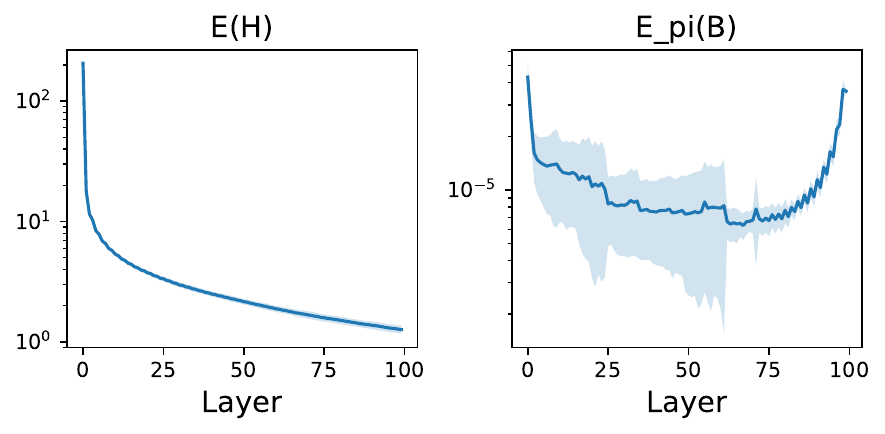}
\includegraphics[height=\hei]{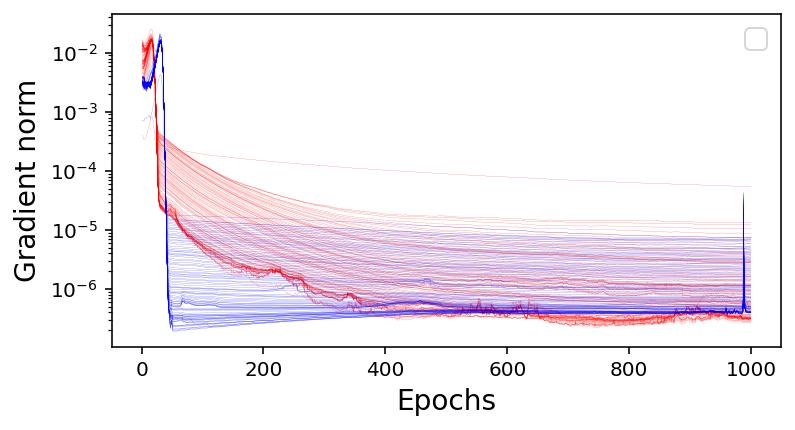}
\includegraphics[height=\hei]{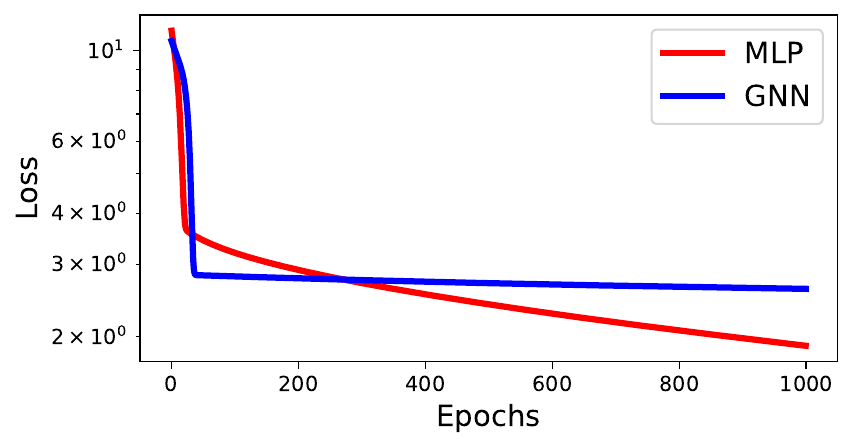} \\
\includegraphics[height=\hei]{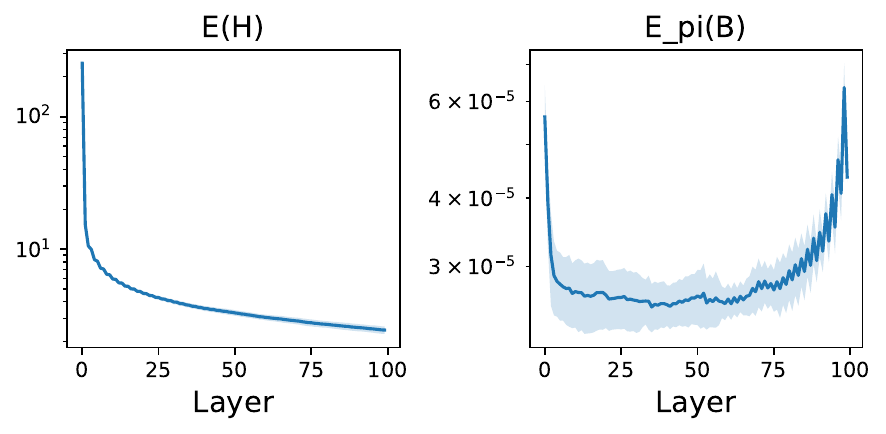}
\includegraphics[height=\hei]{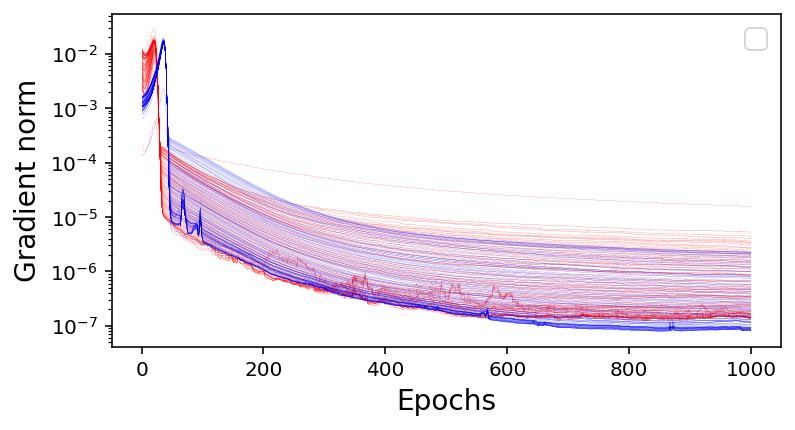}
\includegraphics[height=\hei]{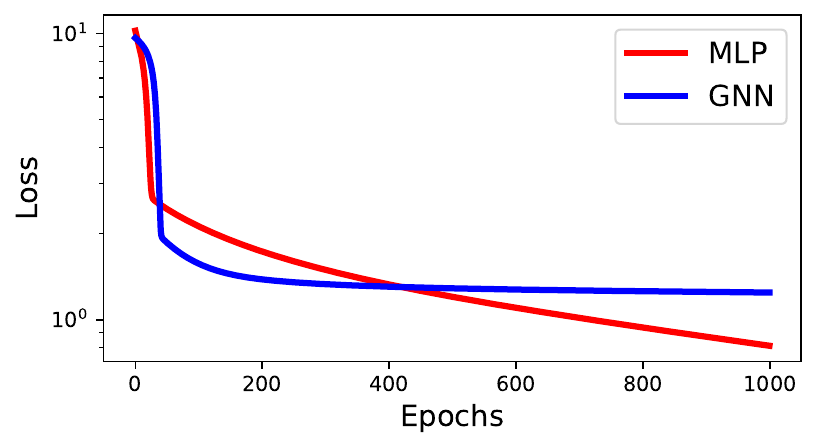} \\
\includegraphics[height=\hei]{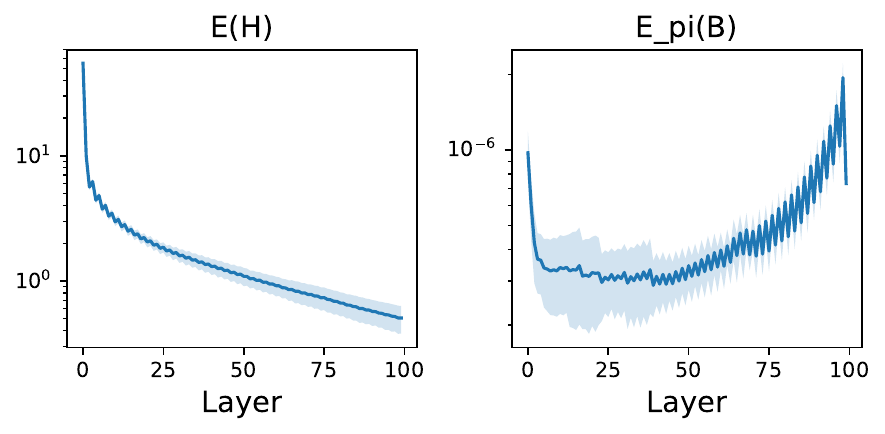}
\includegraphics[height=\hei]{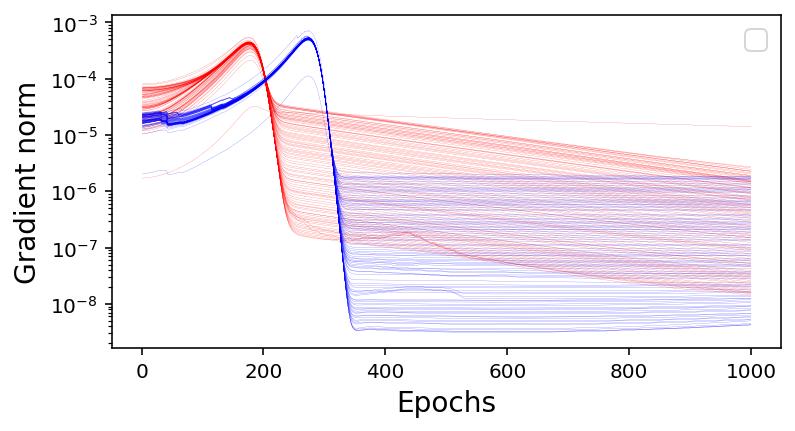}
\includegraphics[height=\hei]{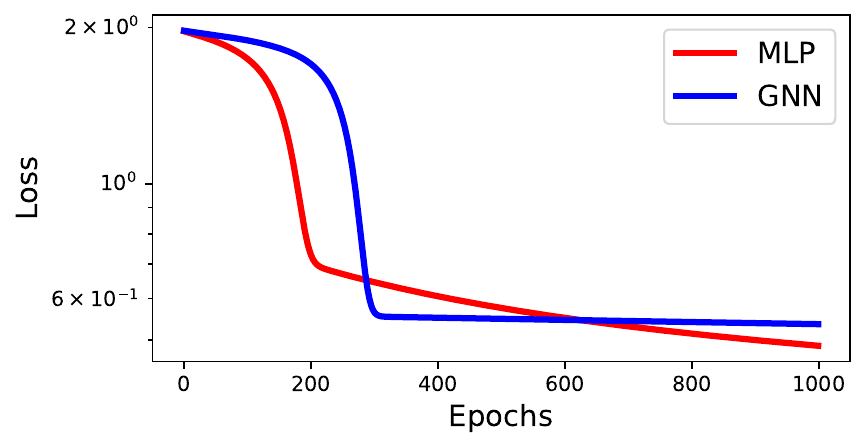} \\
\includegraphics[height=\hei]{fig/smoothing_WikiCS_False.pdf}
\includegraphics[height=\hei]{fig/gradientnorm_dataWikiCS_denseFalse.jpg}
\includegraphics[height=\hei]{fig/gradientnormloss_dataWikiCS_denseFalse.pdf}
  \caption{\textbf{From top to bottom}: datasets: Synthetic (sparse), Synthetic (dense), Cora, Citeseer, Pubmed, WikiCS. \textbf{Left}: illustration of forward oversmoothing by measuring $\Ee(H^{(k)})$ (left) and backward oversmoothing by measuring $\Ee_\pi(B^{(k)})$ (right). Averaged over 10 random initializations. \textbf{Center}: norms of gradients $\partial \Ll / \partial W^{(k)}$ for each layer, for an MLP (red) and GNN (blue) with 100 layers (except for the Synthetic Dense dataset for which we use 40 layers). \textbf{Right}: corresponding losses for MLP and GNN.}
  \label{fig:app}
\end{figure*}

\begin{figure*}[]
\def\hei{2.6cm}
\centering
\includegraphics[height=\hei]{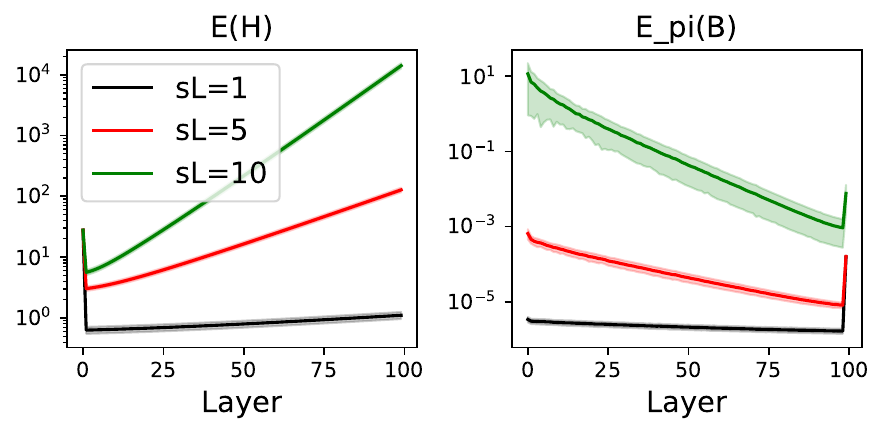}
\includegraphics[height=\hei]{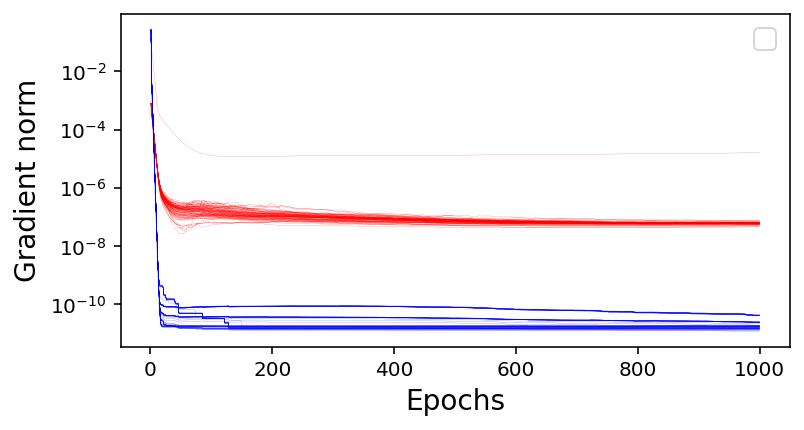}
\includegraphics[height=\hei]{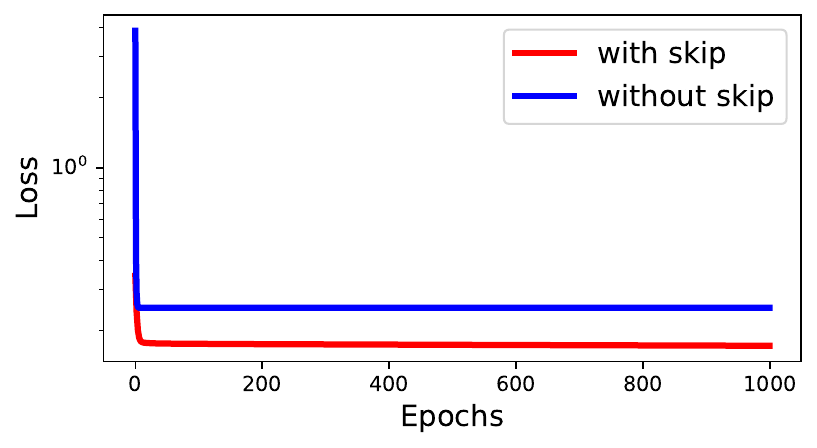} \\\includegraphics[height=\hei]{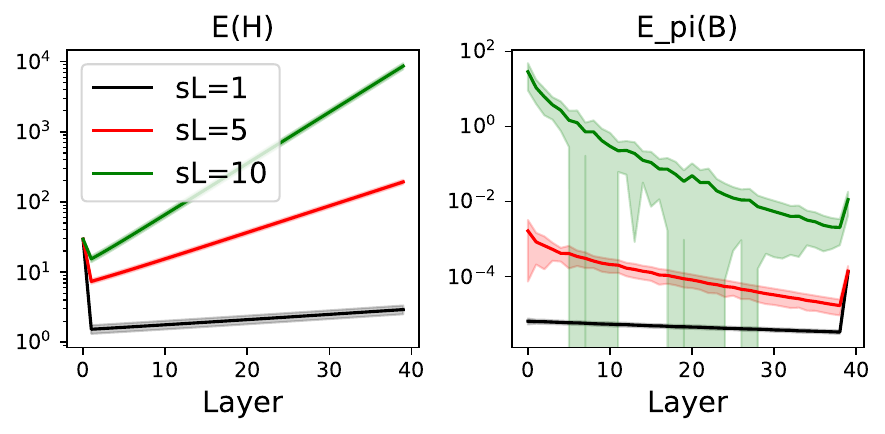}
\includegraphics[height=\hei]{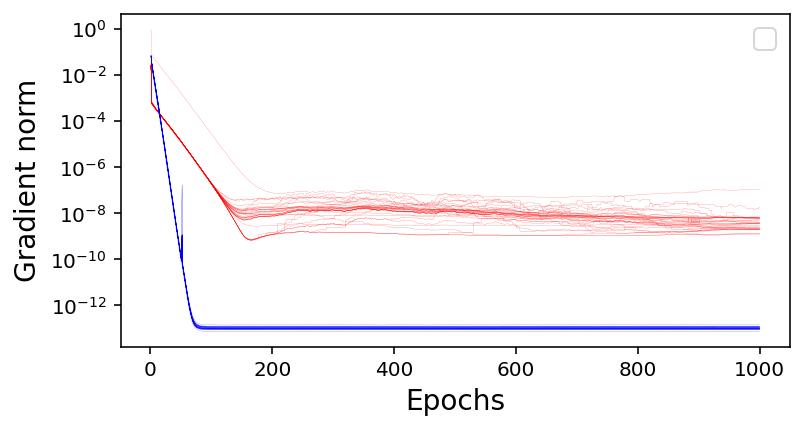}
\includegraphics[height=\hei]{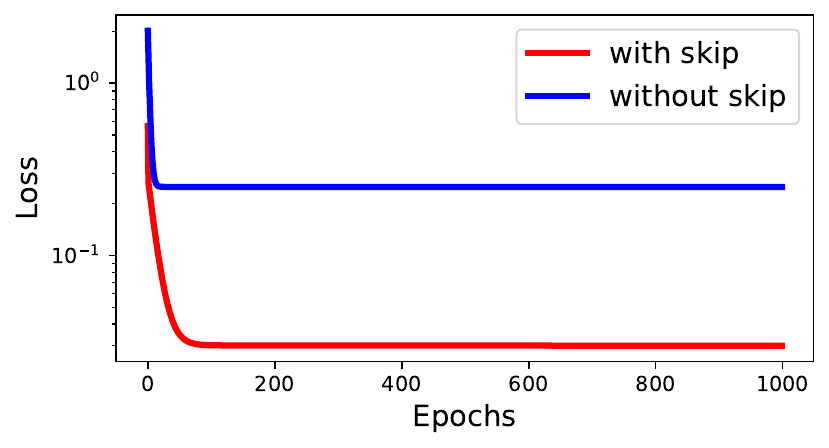} \\
\includegraphics[height=\hei]{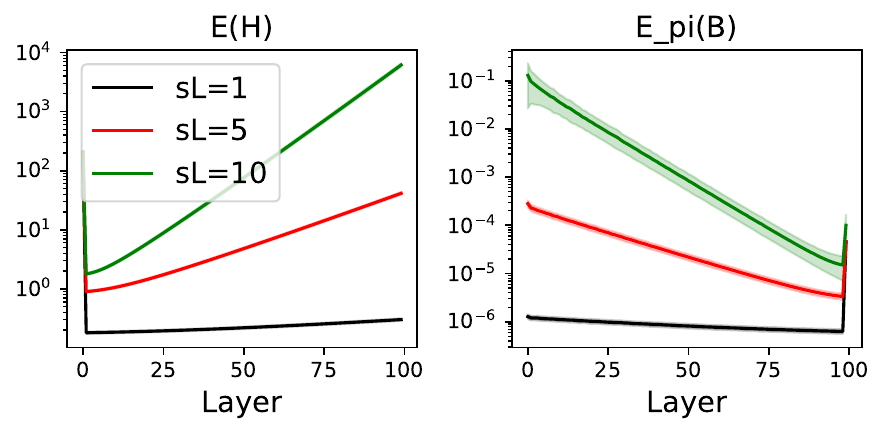}
\includegraphics[height=\hei]{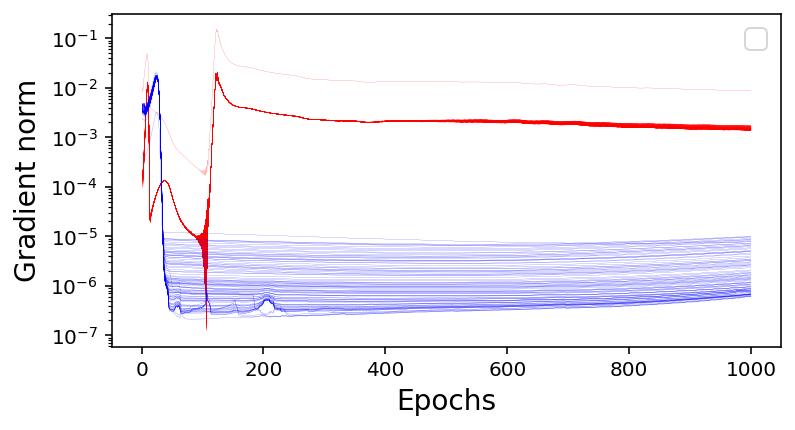}
\includegraphics[height=\hei]{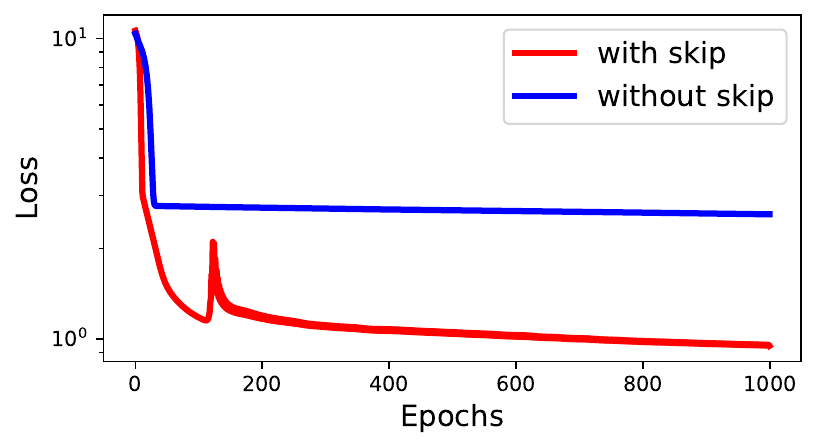} \\
\includegraphics[height=\hei]{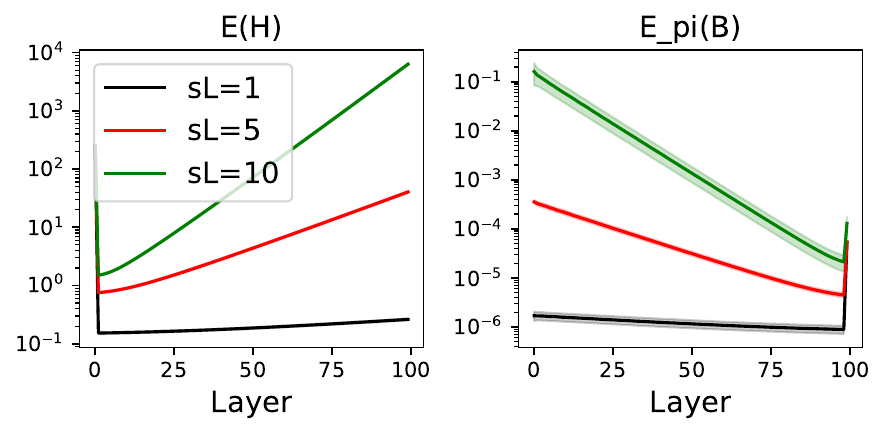}
\includegraphics[height=\hei]{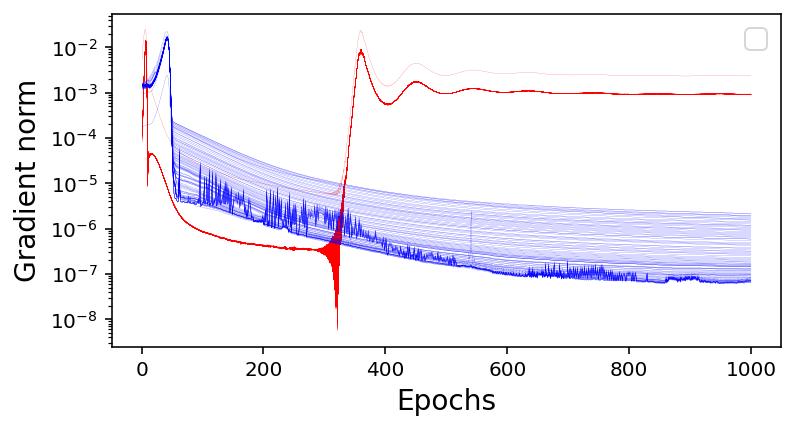}
\includegraphics[height=\hei]{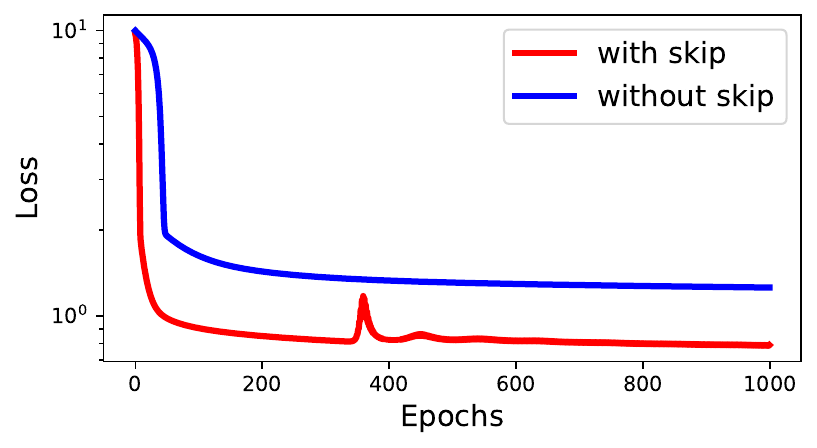} \\
\includegraphics[height=\hei]{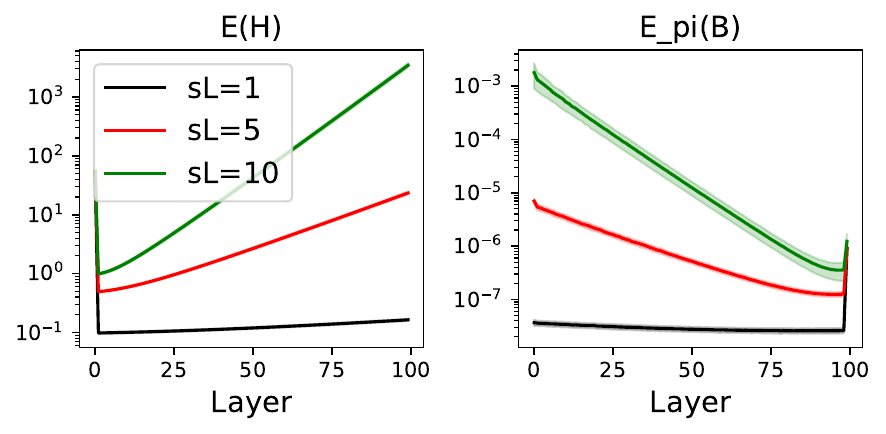}
\includegraphics[height=\hei]{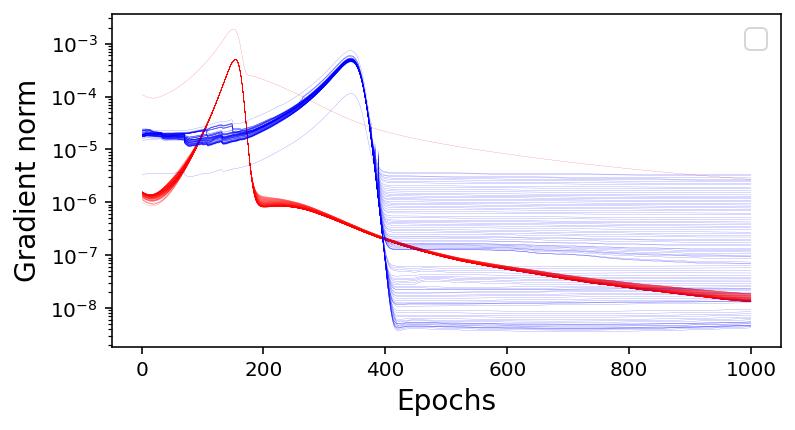}
\includegraphics[height=\hei]{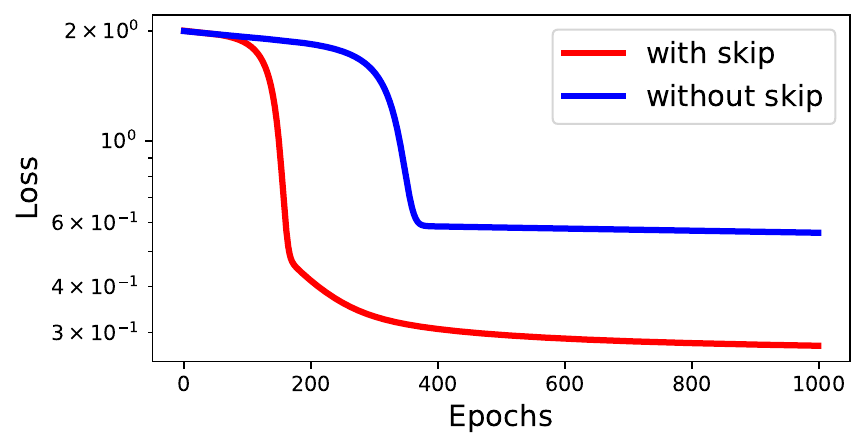} \\
\includegraphics[height=\hei]{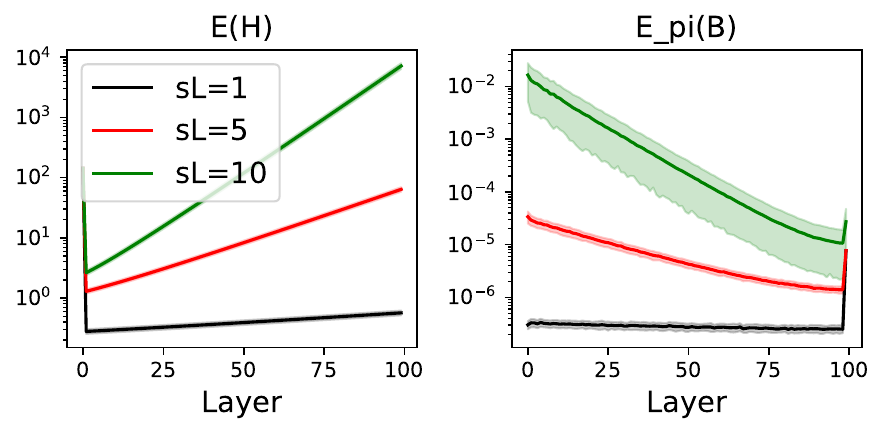}
\includegraphics[height=\hei]{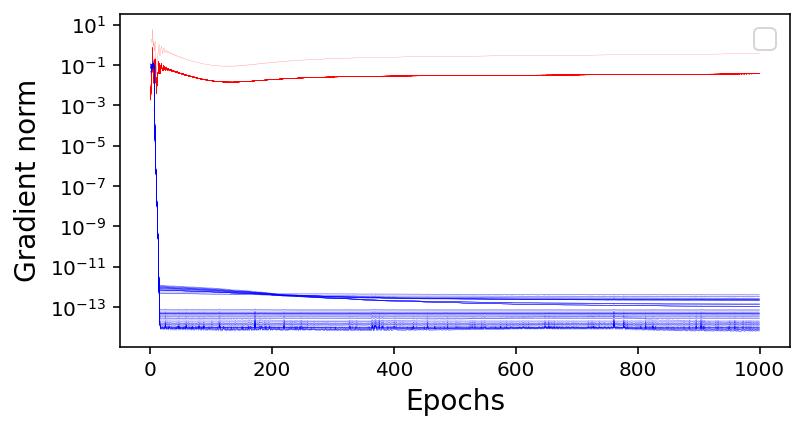}
\includegraphics[height=\hei]{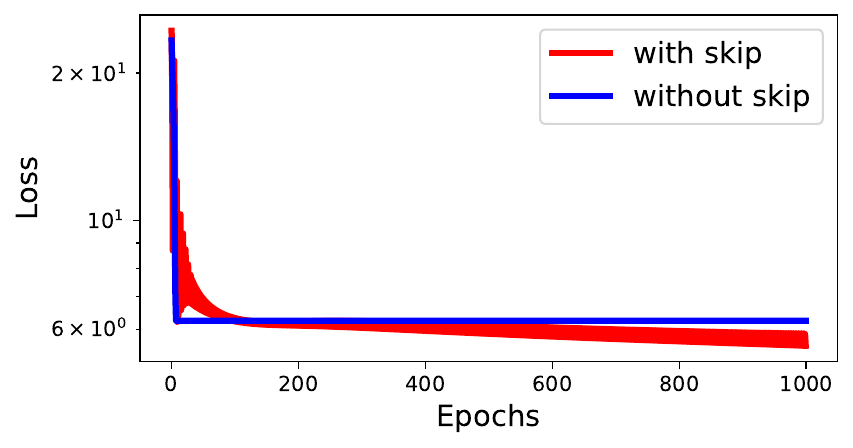}
  \caption{Illustration of skip connections. \textbf{From top to bottom}: datasets: Synthetic (sparse), Synthetic (dense), Cora, Citeseer, Pubmed, WikiCS. \textbf{Left}: for GNNs with skip connections, illustration of forward oversmoothing by measuring $\Ee(H^{(k)})$ (left) and backward oversmoothing by measuring $\Ee_\pi(B^{(k)})$ (right). Averaged over 10 random initializations, for three levels of weight amplitude: $s=\frac{1}{L}$, $s=\frac{5}{L}$, $s=\frac{10}{L}$. \textbf{Center}: norms of gradients $\partial \Ll / \partial W^{(k)}$ for each layer, for a GNN with skip connections and $s=\frac{5}{L}$ at initialization (red), and GNN without skip connections (blue) with 100 layers (except for the Synthetic Dense dataset for which we use 40 layers). \textbf{Right}: corresponding losses for MLP and GNN.}
  \label{fig:skip}
\end{figure*}


\newpage

\section{Technical Lemma}\label{app:technical}

\begin{lemma}
    \label{lem:distance}
    Given $u \in \RR^n$ and $X \in \RR^{n \times d}$, decompose orthogonally each column of $X$, that is $X = u v^\top + X^\perp$, where $v \in \RR^d$ and 
    $u^\top X^\perp = 0$. Then
    \begin{equation}
        \Ee_u(X) = n^{-1/2} \norm{X^\perp}_F
    \end{equation}
    In particular:
    \begin{enumerate}
        \item $\Ee_u(X+Y) \leq \Ee_u(X)+\Ee_u(Y)$
        \item $\Ee_u(X)\leq n^{-1/2}\norm{X}_F$
        \item for all $v$, $\Ee_u(X) = \Ee_u(X + uv^\top)$
        \item $\Ee_u(XW) = n^{-1/2} \norm{X^\perp W}_F \leq \Ee_u(X) \norm{W}_2$
    \end{enumerate}
\end{lemma}
\begin{proof}
    The proof is obvious as $\Ee_u(x) = n^{-1/2}\min_{c}\norm{x - cu}$, computes the orthogonal projection of $x$ on $span(u)^\perp$. All subsequent properties are immediate, using $\norm{XW}_F \leq \norm{X}_F \norm{W}_2$ for the last one.
\end{proof}


\begin{lemma}\label{lem:matrix_computation}
For two matrices $A,B \in \RR^{n \times d}$ we have
\begin{equation}
    \norm{(A\odot B)^\top 1_n}_2 \leq \sqrt{n} \norm{B}_\infty \norm{A}_F
\end{equation}
Moreover, denoting $J=1_n1_n^\top/n$,
\begin{equation}
    \norm{((JA)\odot B)^\top 1_n}_2 \leq \norm{A}_\infty \norm{B^\top 1_n}_2
\end{equation}
\end{lemma}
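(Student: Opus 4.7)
The plan is that both inequalities are direct index-level computations, and the main task is to pick the right expression for the $j$-th coordinate of the vector in question and then apply Cauchy--Schwarz (for part 1) or a trivial bound (for part 2).

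For the first inequality, I would write the $j$-th coordinate of $(A\odot B)^\top 1_n$ as $\sum_i A_{ij}B_{ij}$. Cauchy--Schwarz in $i$ gives
\begin{equation*}
\Bigl(\sum_i A_{ij}B_{ij}\Bigr)^2 \leq \Bigl(\sum_i A_{ij}^2\Bigr)\Bigl(\sum_i B_{ij}^2\Bigr) \leq \|A_{:,j}\|_2^2 \cdot n\|B\|_\infty^2,
\end{equation*}
where $\|B\|_\infty$ is interpreted as the entrywise max (consistent with how it is used in the proofs of Thm.~\ref{thm:backward} and \ref{thm:upper}). Summing over $j$ turns $\sum_j \|A_{:,j}\|_2^2$ into $\|A\|_F^2$, and taking the square root yields the bound.

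For the second inequality, the key observation is that $JA = \tfrac{1}{n}1_n 1_n^\top A$ has constant columns: all rows are equal to the column-average vector $\bar a^\top := \tfrac{1}{n}1_n^\top A$. Hence $((JA)\odot B)_{ij} = \bar a_j B_{ij}$, and
\begin{equation*}
\bigl(((JA)\odot B)^\top 1_n\bigr)_j = \bar a_j \sum_i B_{ij} = \bar a_j (B^\top 1_n)_j.
\end{equation*}
The vector we need to control is therefore the Hadamard product of $\bar a$ and $B^\top 1_n$. Using $|\bar a_j|\leq \max_i|A_{ij}|\leq \|A\|_\infty$ and pulling out this scalar bound gives $\|((JA)\odot B)^\top 1_n\|_2^2 \leq \|A\|_\infty^2 \|B^\top 1_n\|_2^2$, which is the desired inequality.

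There is no real obstacle here; the only mild subtlety is making sure $\|\cdot\|_\infty$ is understood as the entrywise maximum (as in the rest of the appendix), which is what makes the pointwise bound $|\bar a_j|\leq \|A\|_\infty$ and $|B_{ij}|\leq \|B\|_\infty$ legitimate. Both steps are a one-line application of Cauchy--Schwarz / a trivial bound after writing out coordinates.
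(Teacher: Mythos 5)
Your proof is correct and follows essentially the same route as the paper: Cauchy--Schwarz in the node index for the first bound, and for the second the observation that $JA$ has constant columns equal to the column averages, so the scalar $\bar a_j$ factors out and is bounded by the entrywise max norm $\norm{A}_\infty$. Nothing to add.
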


\begin{proof}
Simply
\begin{align*}
    \norm{(A\odot B)^\top 1_n}_2^2 &= \sum_{j=1}^d \pa{\sum_{i=1}^n a_{ij} b_{ij}}^2 \\
    &\leq \sum_{j=1}^d \pa{\sum_{i=1}^n a_{ij}^2} \pa{\sum_{i=1}^n b_{ij}^2} \leq n \norm{B}_\infty^2 \norm{A}_F^2
\end{align*}
and
\begin{align*}
    \norm{((JA)\odot B)^\top 1_n}_2^2 &= \sum_{j=1}^d \pa{\sum_{i=1}^n \pa{\frac{1}{n}\sum_k a_{kj}} b_{ij}}^2 \\
    &= \sum_{j=1}^d \pa{\frac{1}{n}\sum_k a_{kj}}^2\pa{\sum_{i=1}^n b_{ij}}^2 \\
    &\leq \norm{A}_\infty^2 \norm{B^\top 1_n}_2^2
\end{align*}

\end{proof}

\begin{lemma}
\label{lem:norminf}
For all $k$ we have
\begin{equation}
    \norm{P^k X}_{\infty,2} \leq \norm{X}_{\infty,2},\qquad \norm{(P^\top)^k X}_{\infty, 2} \leq \mu_\pi \norm{X}_{\infty,2}
\end{equation}

We also have
\begin{equation}
    \norm{XW}_{\infty,2} \leq \norm{W}_{2} \norm{X}_{\infty,2}
\end{equation}
\end{lemma}
\begin{proof}
It is immediate that to a stochastic matrix $P$,
\begin{align*}
    \norm{PX}_{\infty,2} &= \max_i \norm{\sum_j P_{ij} X_{j,:}}_2 \\
    &\leq \norm{X}_{\infty,2} \max_i \sum_j \abs{P_{ij}} = \norm{X}_{\infty,2}
\end{align*}
and $P^k$ is stochastic.

Then, since $D_\pi^{-1} (P^\top)^k D_\pi$ is stochastic,
\begin{equation*}
    \norm{(P^\top)^k X}_{\infty, 2} \leq \pi_{\max}\norm{D_\pi^{-1}(P^\top)^k D_\pi D_\pi^{-1}X}_{\infty, 2} \leq \pi_{\max} \norm{D_\pi^{-1} X}_{\infty,2} \leq \mu_\pi \norm{X}_{\infty,2}
\end{equation*}


The second inequality is immediate by
\begin{equation*}
    \norm{XW}_{\infty,2} = \max_i \norm{WX_{i,:}^\top}_2 \leq \norm{W}_2 \norm{X}_{\infty,2}
\end{equation*}
\end{proof}

\begin{lemma}\label{lem:bounds}
For all $k$ we have:
\begin{equation}
    \norm{F^{(k+1)}}_{\infty,2} \leq \norm{X^{(k)}}_{\infty,2} \leq \norm{H^{(k)}}_{\infty,2} \leq s^k D_\Xx
\end{equation}
and 
\begin{align*}
    \norm{B^{(L)}}_{\infty,2} \leq \frac{A_L}{n},\qquad 
    \norm{B^{(k)}}_{\infty,2} \leq \frac{\mu_\pi A_L}{n} s^{L-k}
\end{align*}
where $A_L = s^L D_\Xx D_\Ll + D'_\Ll$.

Finally, for all $\ell > k$:
\begin{equation}
    \norm{D_\pi^{-1} B^{(k)}}_{\infty,2} \leq \norm{D_\pi^{-1} B^{(\ell)}}_{\infty,2}s^{\ell-k} 
\end{equation}
\end{lemma}

\begin{proof}
By recursion, since $\abs{\rho(x)} \leq \abs{x}$ and by Lemma \ref{lem:norminf},
\begin{align*}
    \norm{H^{(k)}}_{\infty,2} &= \norm{P \rho(H^{(k-1)}) W^{(k)}}_{\infty,2} \leq s \norm{H^{(k-1)}}_{\infty,2} \leq s^k \norm{X^{(0)}}_{\infty,2} \leq s^k D_\Xx
\end{align*}
and easily
\begin{align*}
    \norm{F^{(k+1)}}_{\infty,2} = \norm{PX^{(k)}}_{\infty,2} \leq \norm{X^{(k)}}_{\infty,2} = \norm{\rho(H^{(k)})}_{\infty,2} \leq \norm{H^{(k)}}_{\infty,2}
\end{align*}


In the same fashion, since $\rho'\leq 1$ and using Lemma \ref{lem:norminf},
    since for a diagonal matrix $D$ we have $D (A \odot B) = A\odot (DB)$, and $D_\pi^{-1} P^\top D_\pi$ is stochastic,
    \begin{align}
        \norm{D_\pi^{-1} B^{(k)}}_{\infty,2} &= \norm{\rho'(H^{(k)}) \odot \pa{D_\pi^{-1} P^\top B^{(k+1)} (W^{(k+1)})^\top}}_{\infty,2} \\
        &\leq \norm{D_\pi^{-1}P^\top D_\pi D_\pi^{-1} B^{(k+1)} (W^{(k+1)})^\top}_{\infty,2} \\
        &\leq s_{k+1} \norm{D_\pi^{-1}B^{(k+1)}}_{\infty,2} \leq s^{(k+1:\ell+1)} \norm{D_\pi^{-1}B^{(\ell)}}_{\infty,2}
    \end{align}
    choosing $\ell=L$ and $\norm{D_\pi^{-1}B^{(L)}}_{\infty,2} \leq \frac{1}{\pi_{\min}}\norm{B^{(L)}}_{\infty,2}$ and $\norm{B^{(L)}}_{\infty,2} \leq \frac{1}{n} (D_\Ll\norm{H^{(L)}}_{\infty,2} + D'_\Ll) \leq \frac{1}{n} (s^L D_\Xx D_\Ll + D'_\Ll)$ we obtain the final bound.

\end{proof}

\end{document}